\newtheorem{claim}{Claim}
\newtheorem{theorem}{Theorem}
\newtheorem{example}{Example}
\newtheorem{definition}{Definition}
\newtheorem{lemma}{Lemma}
\newcommand{\appendixContent}[1]{#1}
\newcommand{\eat}[1]{}
\begin{document}

%
\runningtitle{On the Interpretability of CPE in the Agnostic Settings}

%

\twocolumn[

\aistatstitle{On the Interpretability of Conditional Probability Estimates \\ in the Agnostic Setting}

\aistatsauthor{ Yihan Gao \And Aditya Parameswaran \And Jian Peng }

\aistatsaddress{ University of Illinois at Urbana-Champaign } ]

\begin{abstract}
We study the interpretability of conditional probability estimates for binary classification under the agnostic setting or scenario. Under the agnostic setting, conditional probability estimates do not necessarily reflect the true conditional probabilities. Instead, they have a certain calibration property: among all data points that the classifier has predicted $\mathcal{P}(Y=1|X)=p$, $p$ portion of them actually have label $Y = 1$. For cost-sensitive decision problems, this calibration property provides adequate support for us to use  Bayes Decision Theory. In this paper, we define a novel measure for the calibration property together with its empirical counterpart, and prove an uniform convergence result between them. This new measure enables us to formally justify the calibration property of conditional probability estimations, and provides new insights on the problem of estimating and calibrating conditional probabilities. 

\end{abstract}

\section{Introduction}

Many binary classification algorithms, such as naive Bayes and logistic regression, naturally produce confidence measures in the form of conditional probability of labels. These confidence measures are usually interpreted as the conditional probability of the label $y = 1$ given the feature $x$. An important research question is how to justify these conditional probabilities, i.e., how to prove the trustworthiness of such results. 

In classical statistics, this question is usually studied under the realizable assumption, which assumes that the true underlying probability distribution has the same parametric form as the model assumption. More explicitly, statisticians usually construct a parametric conditional distribution $\mathcal{P}(Y|X, \theta)$, and assume that the true conditional distribution is also of this form (with unknown $\theta$). The justification of conditional probabilities can then be achieved by using either hypothesis testing or confidence interval estimation on $\theta$. 

However, in modern data analysis workflows, the realizable assumption is often violated, e.g. data analysts usually  try out several off-the-shelf classification algorithms to identify those that work the best. This setting is often called {\em agnostic} --- essentially implying that we do not have any knowledge about the underlying distribution. Under the agnostic setting, conditional probability estimates can no longer be justified by standard statistical tools, as most hypothesis testing methods are designed to distinguish two parameter areas in the hypothesis space (e.g., $\theta < \theta_0$ v.s. $\theta \geq \theta_0$), and confidence intervals require realizable assumption to be interpretable.

In this paper, we study the interpretability of conditional probabilities in binary classification in the agnostic setting: what kind of guarantees can we have without making any assumption on the underlying distribution? Justifying these conditional probabilities is important for applications that explicitly utilize the conditional probability estimates of the labels, including medical diagnostic systems (Cooper, 1984)\eat{~\cite{cooper1984nestor}} and fraud detection (Fawcett and Provost, 1997)\eat{~\cite{fawcett1997adaptive}}. In such applications, the misclassification loss function is often asymmetric (i.e., false positive and false negative incur different loss), and accurate conditional probability estimates are crucial empirically. In particular, in medical diagnostic systems, a false positive means additional tests are needed, while a false negative could potentially be fatal. 

\noindent \textbf{Summary of Notation}

We focus on the binary classification problem in this paper. Let us first define some notations here that will be used throughout the paper:

\begin{itemize}
\item $\mathcal{X}$ denotes the discrete feature space and $\mathcal{Y} = \{\pm 1\}$ denotes the label space.
\item $\mathcal{P}$ denotes the underlying distribution over $\mathcal{X} \times \mathcal{Y}$ that governs the generation of datasets.
\item $D = \{(X_1, Y_1), \ldots, (X_n, Y_n)\}$ denotes a set of i.i.d. data points from $\mathcal{P}$.
\item A fuzzy classifier is a function from $\mathcal{X}$ to $[0,1]$ where the output denotes the estimated conditional probability of $\mathcal{P}(Y=1|X)$.
\end{itemize}

\noindent \textbf{Interpretations of Conditional Probability Estimates}

Ideally, we hope that our conditional probability estimates can be interpreted as the true conditional probabilities. This interpretation is justified if we can prove that the conditional probability estimates are close to the true values. Let $l_1(f, \mathcal{P})$ be the $l_1$ distance between the true distribution and the estimated distribution as a measure of the ``correctness'' of conditional probability estimates:
$$ l_1(f, \mathcal{P}) = \mathbb{E}_{X \sim \mathcal{P}} |f(X) - \mathcal{P}(Y=1|X)| $$
Here $X$ is a random variable representing the feature vector of a sample data point, $Y$ is the label of $X$ and $f(X)$ is a fuzzy classifier that estimates $\mathcal{P}(Y=1|X)$.
If we can prove that $l_1(f, \mathcal{P}) \leq \epsilon$ for some small $\epsilon$, then the output of $f$ can be approximately interpreted as the true conditional probability. 

Unfortunately, as we will show in this paper, it is impossible to guarantee any reasonably small upper bound for $l_1(f, \mathcal{P})$ under the agnostic assumption. In fact, as we will demonstrate in this paper, for the cases where we have to make the agnostic assumption, the estimated conditional probabilities are usually no longer close to the true values in practice.

Therefore, instead of trying to bound the $l_1$ distance, we develop an alternative interpretation for these conditional probability estimates. We introduce the following calibration definition for fuzzy classifiers:
\begin{definition}
	Let $\mathcal{X}$ be the feature space, $\mathcal{Y} = \{\pm 1\}$ be the label space and $\mathcal{P}$ be the distribution over $\mathcal{X} \times \mathcal{Y}$. Let $f: \mathcal{X} \rightarrow [0, 1]$ be a fuzzy classifier, then we say $f$ is calibrated if for any $p_1 < p_2$, we have:
	$$ \mathbb{E}_{X \sim \mathcal{P}} [\mathds{1}_{p_1 < f(X) \leq p_2} f(X)] = \mathcal{P}(p_1 < f(X) \leq p_2, Y = 1)$$
\end{definition}

Intuitively, a fuzzy classifier is calibrated if its output correctly reflects the relative frequency of labels among instances they believe to be similar. For instance, suppose the classifier output $f(X) = p$ for $n$ data points, then roughly there are $np$ data points with label $Y=1$. We also define a measure of how close $f$ is to be calibrated:
\begin{definition}\label{dfn_measure}
	A fuzzy classifier $f$ is $\epsilon$-calibrated if
	\begin{align*}
	c(f) = & \sup_{p_1 < p_2} | \mathcal{P}(p_1 < f(X) \leq p_2, Y = 1) \\ & - \mathbb{E}_{X \sim \mathcal{P}} [\mathds{1}_{p_1 < f(X) \leq p_2} f(X)]| \leq \epsilon
	\end{align*}
	
	$f$ is $\epsilon$-empirically calibrated with respect to $D$ if
	\begin{align*}
	c_{emp}(f, D) = & \frac{1}{n} \sup_{p_1 < p_2} | \sum_{i=1}^n \mathds{1}_{p_1 < f(X_i) \leq p_2, Y_i = 1} \\ & - \sum_{i=1}^n \mathds{1}_{p_1 < f(X_i) \leq p_2} f(X_i)]| \leq \epsilon
	\end{align*}
	where $D = \{(X_i, Y_i), \ldots, (X_n, Y_n)\}$ is a size $n$ dataset consisting of i.i.d. samples from $\mathcal{P}$.
\end{definition}

Note that the empirical calibration measure $c_{emp}(f, D)$ can be efficiently computed on a finite dataset. We further prove that under certain conditions, $c_{emp}(f, D)$ converges uniformly to $c(f)$ over all functions $f$ in a hypothesis class. Therefore, the calibration property of these classifiers can be demonstrated by showing that they are empirically calibrated on the training data.

The calibration definition is motivated by analyzing the properties of commonly used conditional probability estimation algorithms: many such algorithms will generate classifiers that are naturally calibrated. 
Our calibration property justifies the common practice of using calibrated conditional probability estimates as true conditional probabilities: we show that if the fuzzy classifier is calibrated and the output of the classifier is the only source of information, then the optimal strategy is to apply Bayes Decision Rule on the conditional probability estimates. 

The uniform convergence result of $c_{emp}(f, D)$ and $c(f)$ has several applications. First, it can be directly used to prove a fuzzy classifier is (almost) calibrated, which is necessary for the conditional probability estimates to be interpretable. Second, it suggests that we need to minimize the empirical calibration measure to obtain calibrated classifiers, which is a new direction for designing conditional probability estimation algorithms. Finally, taking an uncalibrated conditional probability estimates as input, we can calibrate them by minimizing the calibration measure. In fact, one of the most well-known calibration algorithm, the isotonic regression algorithm, can be interpreted this way.

\noindent \textbf{Paper Outline}

The rest of this paper is organized as following. In Section~\ref{sec_motivation}, we argue that the $l_1$ distance cannot be provably bounded under the agnostic assumption (Theorem~\ref{thm_nobound}) and then motivate our calibration definition. In Section~\ref{sec_main} we present the uniform convergence result (Theorem~\ref{thm_main}) and discuss the potential applications. In Section~\ref{sec_experiment}, we conduct experiments to illustrate the behavior of our calibration measure on several common classification algorithms. 
 
\noindent \textbf{Related Work}


Our definition of calibration is similar to the definition of calibration in prediction theory (Foster and Vohra, 1998)\eat{~\cite{foster1998asymptotic}}, where the goal is also to make predicted probability values match the relative frequency of correct predictions. In prediction theory, the problem is formulated from a game-theoretic point of view: the sequence generator is assumed to be malevolent, and the goal is to design algorithms to achieve this calibration guarantee no matter what strategy the sequence generator uses.

To the best of our knowledge, there is no other work addressing the interpretability of conditional probability estimates in agnostic cases. Our definition of calibration is also connected to the problem of calibrating conditional probability estimates, which has been studied in many papers (Zadrozny and Elkan, 2002) (Platt, 1999)\eat{~\cite{zadrozny2001obtaining, platt1999probabilistic}}. 

\section{The Calibration Definition: Motivation \& Impossibility Result}\label{sec_motivation}

\subsection{Impossibility result for $l_1$ distance}

Recall that the $l_1$ distance between $f$ and $\mathcal{P}$ is defined as:
$$ l_1(f, \mathcal{P}) = \mathbb{E}_{X \sim \mathcal{P}}|f(X) - \mathcal{P}(Y=1|X)| $$
Suppose $f$ is our conditional probability estimator that we learned from the training dataset. We attempt to prove that the $l_1$ distance between $f$ and $\mathcal{P}$ is small under the agnostic setting. With the agnostic setting, we do not know anything about $\mathcal{P}$, and the only tool we can utilize is a validation dataset $D_{val}$ that consists of i.i.d. samples from $\mathcal{P}$. Therefore, our best hope would be a prover $A_f(D)$ that:
\begin{itemize}
	\item Returns $1$ with high probability if $l_1(f, \mathcal{P})$ is small.
	\item Returns $0$ with high probability if $l_1(f, \mathcal{P})$ is large.
\end{itemize}

The following theorem states that no such prover exists, and the proof can be found in the appendix.

\begin{theorem}\label{thm_nobound}

Let $\mathcal{Q}$ be a probability distribution over $\mathcal{X}$, and $f: \mathcal{X} \rightarrow [0,1]$ be a fuzzy classifier. Define $B_f$ as:
$$ B_f = \mathbb{E}_{X \sim \mathcal{Q}} \min(f(X), 1 - f(X)) $$
If we have that $ \forall x \in \mathcal{X}, \mathcal{Q}(x) < \frac{1}{10n^2} $, 
then there is no prover $A_f : \{\mathcal{X} \times \mathcal{Y}\}^n \rightarrow \{0, 1\}$ for $f$ satisfying the following two conditions:

For any $\mathcal{P}$ over $\mathcal{X} \times \mathcal{Y}$ such that $\mathcal{P}_X = \mathcal{Q}$ (i.e., $\forall x \in \mathcal{X}, \sum_{y \in \mathcal{Y}} \mathcal{P}(x, y) = \mathcal{Q}(x)$), suppose $D_{val} \in \{\mathcal{X} \times \mathcal{Y}\}^n$ is a validation dataset consisting of $n$ i.i.d. samples from $\mathcal{P}$:
\begin{enumerate}
\item
If $l_1(f, \mathcal{P}) = 0$, then $\mathbf{P}_{D_{val}}(A_f(D_{val}) = 1) > \frac{2}{3}$.
\item
If $l_1(f, \mathcal{P}) > \frac{B_f}{2}$, then $\mathbf{P}_{D_{val}}(A_f(D_{val}) = 1) < \frac{1}{3}$.
\end{enumerate}
\end{theorem}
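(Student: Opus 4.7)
The plan is a Le Cam-style two-point impossibility argument in which the ``hard'' alternative is randomised so that the sample law matches the null on the typical event. For the null, take $\mathcal{P}_0(Y=1\mid x)=f(x)$ with $\mathcal{P}_{0,X}=\mathcal{Q}$, which gives $l_1(f,\mathcal{P}_0)=0$ and forces any valid prover to satisfy $\mathbb{E}_{D\sim\mathcal{P}_0^n}[A_f(D)]>2/3$. For the alternative, draw a random labelling $\sigma:\mathcal{X}\to\{0,1\}$ with $\sigma(x)\sim\mathrm{Bern}(f(x))$ independently across $x$ and set $\mathcal{P}_1^\sigma(Y=1\mid x)=\sigma(x)$, $\mathcal{P}_{1,X}^\sigma=\mathcal{Q}$. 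The key observation is that conditional on the $n$ sampled $X_i$ being pairwise distinct, the joint law of $(X_i,Y_i)_{i=1}^n$ under $\mathcal{P}_0^n$ and under the mixture $\mathbb{E}_\sigma(\mathcal{P}_1^\sigma)^n$ coincide --- in both cases each $Y_i$ is an independent $\mathrm{Bern}(f(X_i))$.

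The hypothesis $\max_x\mathcal{Q}(x)<1/(10n^2)$ is tuned for a birthday bound: $\Pr[\exists\, i\neq j:X_i=X_j]\le\binom{n}{2}\max_x\mathcal{Q}(x)<1/20$, yielding total variation at most $1/20$ between $\mathcal{P}_0^n$ and the $\sigma$-mixture, so $\mathbb{E}_\sigma\mathbb{E}_{D\sim(\mathcal{P}_1^\sigma)^n}[A_f(D)]>2/3-1/20$. To close the argument I would show that for a substantial fraction of $\sigma$ one has $l_1(f,\mathcal{P}_1^\sigma)>B_f/2$: the prover condition then forces $\mathbb{E}_{D\sim(\mathcal{P}_1^\sigma)^n}[A_f(D)]<1/3$ on those $\sigma$, contradicting the averaged lower bound. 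Direct computation gives $\mathbb{E}_\sigma l_1(f,\mathcal{P}_1^\sigma)=2\,\mathbb{E}_X[f(X)(1-f(X))]\ge B_f$ via the pointwise inequality $2t(1-t)\ge\min(t,1-t)$ on $[0,1]$, and $\mathrm{Var}_\sigma(l_1)$ is bounded by $\max_x\mathcal{Q}(x)\cdot\mathbb{E}_\sigma l_1/2$, so Chebyshev yields $\Pr_\sigma(l_1\le B_f/2)\le 1/(5 n^2 B_f)$, a small constant provided $B_f$ is at least of order $1/n^2$.

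The main obstacle is this concentration step when $B_f$ is very small, where the Chebyshev tail becomes vacuous. In that regime the cleanest fix is to replace the randomised alternative by a deterministic $\mathcal{P}_1$ obtained by perturbing $\mathcal{P}_0$ at a single high-$\mathcal{Q}$ feature value, chosen so that $l_1(f,\mathcal{P}_1)$ sits just above $B_f/2$; since $\mathrm{TV}(\mathcal{P}_0,\mathcal{P}_1)=l_1(f,\mathcal{P}_1)$ here, the tensorisation bound $\mathrm{TV}(\mathcal{P}_0^n,\mathcal{P}_1^n)\le n\cdot l_1(f,\mathcal{P}_1)$ stays below $1/3$ whenever $B_f$ is at most of order $1/n$, covering the complementary regime. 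The constants $1/10$, $2/3$ and $1/3$ in the statement appear chosen precisely so that both the birthday probability and the concentration (or TV) tail stay comfortably inside the $2/3-1/3$ gap in both regimes.
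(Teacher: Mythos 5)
Your construction of the hard alternative is exactly the one the paper uses: draw $\sigma(x)\sim\mathrm{Bern}(f(x))$ independently, set $\mathcal P_1^\sigma(Y=1\mid x)=\sigma(x)$, and observe that conditional on the $X_i$ being pairwise distinct the sample law under $\mathcal P_0^n$ and under the $\sigma$-mixture coincide, so the birthday bound from $\mathcal Q(x)<1/(10n^2)$ gives a TV gap below $1/10$ and hence an averaged acceptance probability above $2/3-1/10$. Up to that point you and the paper are doing the same thing (the paper packages it as a lemma about ``verifiers'', but the mechanism is identical).

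The gap is in the last step. You treat $l_1(f,\mathcal P_1^\sigma)>B_f/2$ as a concentration statement about $\sigma$ and reach for Chebyshev, which creates a spurious ``small $B_f$'' regime that you then try to patch with a deterministic single-point perturbation; as written, neither the variance estimate nor the perturbation argument is carried through, and matching the two regimes requires checking constants that you only gesture at. The observation that makes all of this unnecessary is that the bound holds \emph{deterministically, for every} $\sigma$: since $\sigma(x)\in\{0,1\}$, one has $|f(x)-\sigma(x)|\ge\min(f(x),1-f(x))$ pointwise, whence
$$l_1(f,\mathcal P_1^\sigma)=\mathbb E_{X\sim\mathcal Q}\,|f(X)-\sigma(X)|\;\ge\;\mathbb E_{X\sim\mathcal Q}\min(f(X),1-f(X))=B_f>\tfrac{B_f}{2}$$
for \emph{all} $\sigma$. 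With that, every $\mathcal P_1^\sigma$ is an instance of condition~2, the acceptance probability is below $1/3$ for each of them, and averaging over $\sigma$ already contradicts the lower bound $2/3-1/10$; no Chebyshev, no case split, and no constraint on $B_f$ is needed. This is precisely how the paper closes the argument. So your proposal identifies the right construction but misses the one-line pointwise inequality that finishes it, and the compensating machinery you introduce is both unnecessary and not actually brought to a verified conclusion.
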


We made the assumption in Theorem~\ref{thm_nobound} to exclude the scenario where a significant amount of probability mass concentrates on a few data points so that their corresponding conditional probability can be estimated via repeated sampling. Note that the statement is not true in the extreme case where all probability mass concentrates on one single data point (i.e., $\exists x \in X, Q(x) = 1$). The assumption is true when the feature space $\mathcal{X}$ is large enough such that it is almost impossible for any data point to have significant enough probability mass to get sampled more than once in the training dataset.

The significance of Theorem~\ref{thm_nobound} is that any attempt to guarantee a small upper bound of $l_1(f, \mathcal{P})$ would definitely fail. Thus, we can no longer interpret the conditional probability estimates as the true conditional probabilities under the agnostic setting. This result motivates us to develop a new measure of ``correctness'' to justify the conditional probability estimates.

\subsection{$l_1(f, \mathcal{P})$ in practice}

The fact that we cannot guarantee an upper bound of the $l_1$ distance is not merely a theoretical artifact. In fact, in the cases where we need to make the agnostic assumption, the value of $l_1(f, \mathcal{P})$ is often very large in practice. Here we use the following document categorization example to demonstrate this point.

\begin{example}\label{example_logistic}
Denote $Z$ to be the collection of all English words. In this problem the feature space $\mathcal{X} = Z^*$ is the collection of all possible word sequences, and $\mathcal{Y}$ denotes whether this document belongs to a certain topic (say, football). Denote $\mathcal{P}$ as the following data generation process: $X$ is generate from the Latent Dirichlet Allocation model (Blei et al., 2003), and $Y$ is chosen randomly according to the topic mixture.

We use logistic regression, which is parameterized by a weight function $w:Z \rightarrow \mathbb{R}$, and two additional parameters $a$ and $b$. For each document $X = z_1 z_2 \ldots z_k$, the output of the classifier is:
$$ f(X) = \frac{1}{1 + \exp(- a \sum_{i=1}^k w(z_i) - b)} $$
\end{example}


The reason that we are using automatically generated documents instead of true documents here is that the conditional probabilities $P(Y|X)$ are directly computable (otherwise we cannot evaluate $l_1(f, \mathcal{P})$ and other measures). We conducted an experimental simulation for this example, and the experimental details can be found in the appendix. Here we summarize the major findings: the logistic regression classifier has very large $l_1$ error, which is probably due to the discrepancy between the logistic regression model and the underlying model. However, the logistic regression classifier is almost naturally calibrated in this example. This is not a coincidence, and we will discuss the corresponding intuition in Section~\ref{sec_cal_motivation}.

\subsection{The Motivation of the Calibration Measure}\label{sec_cal_motivation}

Let us revisit Example~\ref{example_logistic}. This time, we fix the word weight function $w$. In this case, every document $X$ can be represented using a single parameter $w(X) = \sum_i w(z_i)$, and we search for the optimal $a$ and $b$ such that the log-likelihood is maximized. 
This is illustrated in Figure~\ref{fig_example}.

\begin{figure}[h]
	\begin{center}
		\includegraphics[height = 3cm]{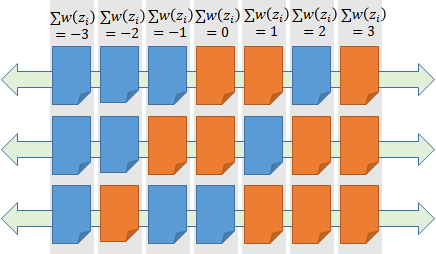}
		\caption{Illustration of Example~\ref{example_logistic}}\label{fig_example}
	\end{center}
\end{figure}

Now, intuitively, to maximize the log-likelihood, we need the sigmoid function $\frac{1}{1 + \exp( - a w(X) - b)}$ to match the conditional probability of $Y$ conditioned on $w(X)$: $\mathcal{P}(Y=1|w(X))$. Therefore, for the optimal $a$ and $b$, we could say that the following property is roughly correct:
$$ \mathcal{P}(Y=1|w(X)) \approx \frac{1}{1 + \exp(- a w(X) - b)} $$
In other words, 
$$ \forall 0 \leq p \leq 1, \mathbb{E}[\mathcal{P}(Y=1|X)|f(X) = p] \approx p $$

Let us examine this example more closely. The reason why the logistic regression classifier tells us that $f(X) \approx p$ is because of the following: among all the documents with similar weight $w(X)$, about $p$ portion of them actually belong to the topic in the training dataset. This leads to an important observation: logistic regression estimates the conditional probabilities by computing the relative frequency of labels among documents it believes to be similar.

This behavior is not unique to logistic regression. Many other algorithms, including decision tree classifiers, nearest neighbor (NN) classifiers, and neural networks, exhibit similar behavior:
\begin{itemize}
	\item
	In decision trees, all data points reaching the same decision leaf are considered similar.
	\item
	In NN classifiers, all data points with the same nearest neighbors are considered similar.
	\item
	In neural networks, all data points with the same output layer node values are considered similar.
\end{itemize}

We can abstract the above conditional probability estimators as the following two-step process:
\begin{enumerate}
\item
Partition the feature space $\mathcal{X}$ into several regions.
\item
Estimate the relative frequency of labels among all data points inside each region.
\end{enumerate}


The definition of the calibration property follows easily from the above two-step process. We can argue that the classifier is approximately calibrated, if for each region $S$ in the feature space $\mathcal{X}$, the output conditional probability of data points in $S$ is close to the actual relative frequency of labels in $S$. The definition for the calibration property then follows from the fact that all data points inside each region have the same output conditional probabilities.
\begin{align*}
\forall p_1 < p_2, \quad & \mathcal{P}(p_1 < f(X) \leq p_2, Y = 1) \\ & = \mathbb{E}_{X \sim \mathcal{P}} [\mathds{1}_{p_1 < f(X) \leq p_2} f(X)]
\end{align*}

\noindent \textbf{Using Calibrated Conditional Probabilities in Decision Making}

The calibration property justifies the common practice of using estimated conditional probabilities in decision making. Consider the binary classification problem with assymetric misclassification loss: we lose $a$ points for every false positive and $b$ points for every false negative. In this case, the best decision strategy is to predict $1$ if $\mathcal{P}(Y=1|X) \geq \frac{a}{a + b}$ and predict $-1$ otherwise. Now consider the case when we do not know $\mathcal{P}(Y=1|X)$, but only know the value of $f(X)$ instead. If we can only use $f(X)$ to make decision, and $f$ is calibrated, then the best strategy is to use $f(X)$ in the same way as $\mathcal{P}(Y=1|X)$ (the proof can be found in the appendix):
\begin{claim}\label{clm_bayes_decision}
Suppose we are given a calibrated fuzzy classifier $f: \mathcal{X} \rightarrow [0,1]$, we need to make decisions solely based on the output of $f$. Denote our decision as $g:[0, 1] \rightarrow \{\pm 1\}$ (i.e., our decision for $X$ is $g(f(X))$). Then the optimal strategy $g^*$ to minimize the expected loss is the following:
\begin{align*}
g^*(x) = \left\{ \begin{array}{lr} 1 & x \geq \frac{a}{a + b} \\ -1 & x < \frac{a}{a + b} \end{array} \right.
\end{align*}
\end{claim}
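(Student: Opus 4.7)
The plan is to use the calibration property to rewrite the expected loss as an expectation of a function of $f(X)$ alone, and then optimize pointwise in $p=f(X)$. Set $S_+ = g^{-1}(\{+1\}) \subseteq [0,1]$, so the asymmetric misclassification loss of the composite rule $x \mapsto g(f(x))$ can be written as
\begin{align*}
L(g) \;=\; a\cdot\mathcal{P}(f(X)\in S_+,\, Y=-1) \;+\; b\cdot\mathcal{P}(f(X)\notin S_+,\, Y=1).
\end{align*}

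The crucial step is to lift the calibration identity from half-open intervals to arbitrary Borel subsets of $[0,1]$. I would introduce the two finite Borel measures on $[0,1]$ defined by $\mu_1(S)=\mathcal{P}(f(X)\in S,\, Y=1)$ and $\mu_2(S)=\mathbb{E}[\mathds{1}_{f(X)\in S}\, f(X)]$. Calibration says $\mu_1(S)=\mu_2(S)$ for every $S=(p_1,p_2]$, and the half-open intervals form a $\pi$-system generating the Borel $\sigma$-algebra on $[0,1]$, so by Dynkin's $\pi$-$\lambda$ theorem $\mu_1=\mu_2$ on all Borel sets. In particular,
\begin{align*}
\mathcal{P}(f(X)\in S,\, Y=1) \;=\; \mathbb{E}\bigl[\mathds{1}_{f(X)\in S}\, f(X)\bigr] \qquad \text{for every Borel } S\subseteq[0,1].
\end{align*}

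Applying this identity with $S=S_+$ and with $S=[0,1]\setminus S_+$, and using $\mathcal{P}(f(X)\in S_+,Y=-1)=\mathcal{P}(f(X)\in S_+)-\mathcal{P}(f(X)\in S_+,Y=1)$, the loss rewrites as
\begin{align*}
L(g) \;=\; \mathbb{E}\bigl[\, a(1-f(X))\,\mathds{1}_{f(X)\in S_+} \;+\; b\,f(X)\,\mathds{1}_{f(X)\notin S_+}\,\bigr].
\end{align*}
Because the integrand depends on $X$ only through $f(X)$, I would now minimize pointwise: for each realized value $p$ of $f(X)$, placing $p$ into $S_+$ contributes $a(1-p)$, while excluding it contributes $bp$. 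The former is at most the latter exactly when $p\geq a/(a+b)$, and this recovers $g^*$; the tie at $p=a/(a+b)$ is immaterial for $L(g)$, consistent with the ``$\geq$'' in the statement. The main obstacle is the extension in the middle paragraph: the calibration definition only supplies the identity on intervals, whereas the pointwise optimization really requires it on the possibly complicated level set $g^{-1}(\{+1\})$ of an arbitrary measurable $g$. The $\pi$-$\lambda$ argument is standard, but it is the step that must be flagged explicitly, since without it one could only conclude optimality of $g^*$ among threshold rules rather than among all measurable decision rules.
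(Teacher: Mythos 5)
Your proof is correct, and it takes a genuinely different route from the paper's. The paper exploits the fact that $\mathcal{X}$ is declared discrete: it sets $S=\{f(X):X\in\mathcal{X}\}$, decomposes the expected loss as a countable sum over $p\in S$, argues that the summand at each $p$ is minimized by choosing $g^*(p)=1$ iff $a\,\mathcal{P}(Y=-1\mid f(X)=p)\le b\,\mathcal{P}(Y=1\mid f(X)=p)$, and then invokes the pointwise consequence of calibration $\mathcal{P}(Y=1\mid f(X)=p)=p$ (which follows by shrinking the interval $(p_1,p]$ onto the singleton $\{p\}$). You instead parametrize $g$ by its acceptance region $S_+=g^{-1}(\{+1\})$, extend the calibration identity from half-open intervals to arbitrary Borel sets via the $\pi$-$\lambda$ theorem, rewrite the loss as $\mathbb{E}\bigl[a(1-f(X))\mathds{1}_{f(X)\in S_+}+b\,f(X)\mathds{1}_{f(X)\notin S_+}\bigr]$, and minimize pointwise in $p=f(X)$. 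What your version buys is generality: it does not use discreteness of $\mathcal{X}$ at all, and it is explicit that optimality holds over all measurable $g$, not just threshold rules. What the paper's version buys is elementariness: once the range of $f$ is countable, no measure-theoretic machinery beyond taking a limit of intervals is needed, and the ``relative-frequency'' reading of calibration at each output level $p$ is directly visible. Both arguments are sound; yours is the cleaner one if one drops the standing assumption that $\mathcal{X}$ is discrete, and you correctly flag that the interval-to-Borel extension is the step that must be justified rather than assumed.
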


\section{Uniform Convergence of the Calibration Measure}\label{sec_main}



\subsection{The Uniform Convergence Result}\label{sec_thm}

Let $\mathcal{G}$ be a collection of functions from $\mathcal{X} \times \mathcal{Y}$ to $[0, 1]$, the Rademacher Complexity (Bartlett and Mendelson, 2003)\eat{~\cite{bartlett2003rademacher}}
\footnote{Our definition of Rademacher Complexity comes from Shalev-Shwartz and Ben-David's textbook (2014)\eat{~\cite{shalev2014understanding}}, which is slightly different from the original definition in Bartlett and Mendelson's paper (2003)\eat{~\cite{bartlett2003rademacher}}.}
 of $\mathcal{G}$ with respect to $D$ is defined as (Shalev-Shwartz and Ben-David, 2014)\eat{~\cite{shalev2014understanding}}:
$$ R_{D}(\mathcal{G}) = \frac{1}{n} \mathbb{E}_{\sigma \sim \{\pm 1\}^n} [ \sup_{g \in \mathcal{G}} \sum_{i=1}^n \sigma_i g(x_i, y_i) ] $$

Then we have the following result:

\begin{theorem}\label{thm_main}
	Let $\mathcal{F}$ be a set of fuzzy classifiers, i.e., functions from $\mathcal{X}$ to $[0, 1]$. Let $\mathcal{H}$ be the set of binary classifiers obtained by thresholding the output of fuzzy classifiers in $\mathcal{F}$:
	$$\mathcal{H} = \{\mathds{1}_{p_1 < f(x) \leq p_2} : p_1, p_2 \in \mathbb{R}, f \in \mathcal{F}\}$$
	Suppose the Rademacher Complexity of $\mathcal{H}$ satisfies:
	$$ \mathbb{E}_{D} R_{D}(\mathcal{H}) + \sqrt{\frac{2 \ln(8 / \delta)}{n}} < \frac{\epsilon}{2}$$
	Then,
	$$ \mathbf{Pr}_D( \sup_{f \in \mathcal{F}} |c(f) - c_{\text{emp}}(f, D)| > \epsilon ) < \delta $$
\end{theorem}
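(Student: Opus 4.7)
The plan is to reduce $\sup_{f \in \mathcal{F}} |c(f) - c_{\text{emp}}(f, D)|$ to two uniform deviation problems over the class $\mathcal{H}$, each of which the Rademacher-complexity hypothesis can control via a standard symmetrization-plus-McDiarmid argument.

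First, since $c(f)$ and $c_{\text{emp}}(f, D)$ are suprema over the same range of $(p_1, p_2)$, the elementary inequality $|\sup_u |a(u)| - \sup_u |b(u)|| \le \sup_u |a(u) - b(u)|$ and the triangle inequality give
\begin{align*}
|c(f) - c_{\text{emp}}(f, D)| \le \sup_{p_1 < p_2} |T_1(f, p_1, p_2)| + \sup_{p_1 < p_2} |T_2(f, p_1, p_2)|,
\end{align*}
where $T_1 = \mathcal{P}(p_1 < f(X) \le p_2, Y = 1) - \tfrac{1}{n}\sum_i \mathds{1}_{p_1 < f(X_i) \le p_2, Y_i = 1}$ and $T_2 = \mathbb{E}[\mathds{1}_{p_1 < f(X) \le p_2} f(X)] - \tfrac{1}{n}\sum_i \mathds{1}_{p_1 < f(X_i) \le p_2} f(X_i)$. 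Taking the outer supremum over $f \in \mathcal{F}$ leaves two uniform empirical processes to bound.

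For $\sup_{f, p_1 < p_2}|T_1|$, the relevant class is $\mathcal{H}' = \{(x,y) \mapsto h(x)\,\mathds{1}_{y=1} : h \in \mathcal{H}\}$. Its Rademacher complexity satisfies $R_D(\mathcal{H}') \le R_D(\mathcal{H})$, either by Talagrand's contraction principle applied to the $1$-Lipschitz maps $t \mapsto t\,\mathds{1}_{Y_i = 1}$, or directly by conditioning on the $Y_i$ and using Jensen's inequality to reinstate mean-zero Rademacher terms on the indices with $Y_i \ne 1$. For $\sup_{f, p_1 < p_2}|T_2|$, the product $\mathds{1}_{p_1 < f(x) \le p_2} f(x)$ is not itself in $\mathcal{H}$, so I would use the layer-cake identity
\begin{align*}
\mathds{1}_{p_1 < f(x) \le p_2}\, f(x) = p_1\,\mathds{1}_{p_1 < f(x) \le p_2} + \int_{p_1}^{p_2} \mathds{1}_{t < f(x) \le p_2}\, dt,
\end{align*}
which expresses $T_2$ as an integral (plus a $p_1$-weighted term) of deviations of indicator functions that all lie in $\mathcal{H}$. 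Pulling the sup over $\mathcal{H}$ outside the integral and using $p_2 \le 1$ then gives $\sup_{f, p_1 < p_2}|T_2| \le \sup_{h \in \mathcal{H}} |\mathbb{E}[h(X)] - \tfrac{1}{n}\sum_i h(X_i)|$.

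Finally, I would apply the textbook Rademacher-based uniform convergence bound (symmetrization to $\mathbb{E}_D R_D$, then McDiarmid's bounded-differences inequality) to each of the two suprema. Since both classes have Rademacher complexity at most $R_D(\mathcal{H})$, a union bound at level $\delta/2$ on each term together with the hypothesis $\mathbb{E}_D R_D(\mathcal{H}) + \sqrt{2\ln(8/\delta)/n} < \epsilon/2$ yields $\sup_f |c(f) - c_{\text{emp}}(f, D)| \le \epsilon$ with probability at least $1 - \delta$. The main obstacle is the $T_2$ step: the product of an indicator with the classifier's own continuous output does not live inside $\mathcal{H}$, nor inside any class whose Rademacher complexity is directly assumed, and the layer-cake rewrite is what makes the stated hypothesis on $R_D(\mathcal{H})$ suffice.
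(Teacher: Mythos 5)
Your proposal follows essentially the same route as the paper: decompose $|c(f)-c_{\text{emp}}(f,D)|$ into the two uniform-deviation terms for $\mathds{1}_{p_1<f(x)\le p_2,\,y=1}$ and $f(x)\mathds{1}_{p_1<f(x)\le p_2}$, reduce each to $R_D(\mathcal{H})$ (the paper uses an explicit $\max(z_i,y_i)$ symmetrization trick where you invoke contraction/Jensen, and both use the same layer-cake identity $f(x)=\int_0^1\mathds{1}_{t<f(x)}\,dt$ for the second term), and finish with the textbook Rademacher uniform-convergence bound plus a union bound. The only differences are cosmetic (applying the layer-cake to the deviation rather than directly to the Rademacher complexity), so the argument is correct and matches the paper's proof.
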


The proof of this theorem, together with a discussion on the hypothesis class $\mathcal{H}$, can be found in the appendix.

\subsection{Applications of Theorem~\ref{thm_main}}

\subsubsection{Verifying the calibration of classifier}

The first application of Theorem~\ref{thm_main} is that we can verify whether the learned classifier $f$ is calibrated. For simple hypothesis spaces $\mathcal{F}$ (e.g., logistic regression), the corresponding hypothesis space $\mathcal{H}$ has low Rademacher Complexity. In this case, Theorem~\ref{thm_main} naturally guarantees the generalization of calibration measure. 

There are also cases where the Rademacher Complexity of $\mathcal{H}$ is not small. One notable example is SVM classifiers with Platt Scaling (Platt, 1999)\eat{~\cite{platt1999probabilistic}}:
\begin{claim}\label{clm_svm}
	Let $\mathcal{X} \subseteq \mathbb{R}^d$ and $\forall x \in \mathcal{X}, ||x||_2 \leq 1$. Let $\mathcal{F}$ be the following hypothesis class:
	\begin{align*}
	\mathcal{F} = \{x \rightarrow \frac{1}{1 + \exp(a w^T x + b)} : \\ w \in \mathbb{R}^d, ||w||_2 \leq B, a, b \in \mathbb{R} \}
	\end{align*}
	If the training data size $n < d$ and the training data $X_i$ are linearly independent, then $R_D(\mathcal{H}) = \frac{1}{2}$.
\end{claim}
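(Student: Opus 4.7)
The plan is to establish that $\mathcal{H}$ \emph{shatters} the sample $X_1,\ldots,X_n$ and then invoke the standard fact that shattering by a $\{0,1\}$-valued class forces the empirical Rademacher complexity to equal $1/2$. Concretely, I would prove that for every subset $S \subseteq [n]$ there exists $h_S \in \mathcal{H}$ with $h_S(X_i) = \mathds{1}_{i \in S}$ for all $i$.

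For the shattering construction, fix $S$ and define $u \in \{\pm 1\}^n$ by $u_i = +1$ if $i \in S$ and $u_i = -1$ otherwise. Since $X_1,\ldots,X_n$ are linearly independent and $n < d$, the linear system $X w = u$ (with rows of $X$ equal to $X_i^T$) has a solution; let $w^*$ be one with $\|w^*\|_2 = C$, and set $w = \alpha w^*$ for $\alpha = \min(1, B/C)$, so that $\|w\|_2 \leq B$ and $w^T X_i = \alpha u_i$. Now choose $b = 0$ and take $a > 0$ large. Then
$$ f(X_i) = \frac{1}{1 + \exp(a \alpha u_i)} $$
lies strictly below $1/2$ when $i \in S$ (so $u_i = +1$) and strictly above $1/2$ when $i \notin S$. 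Picking $p_1 = -1$ and $p_2 = 1/2$, the indicator $h_S(x) = \mathds{1}_{p_1 < f(x) \leq p_2}$ belongs to $\mathcal{H}$ and equals $\mathds{1}_{i \in S}$ on the sample.

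Once shattering is in hand, the Rademacher computation is routine. For any $\sigma \in \{\pm 1\}^n$, taking $S = \{i : \sigma_i = +1\}$ yields $\sum_i \sigma_i h_S(X_i) = |\{i : \sigma_i = +1\}|$, so $\sup_{h \in \mathcal{H}} \sum_i \sigma_i h(X_i) \geq |\{i : \sigma_i = +1\}|$. The matching upper bound holds for any $\{0,1\}$-valued class because $\sum_i \sigma_i h(X_i) \leq \sum_{i : \sigma_i = +1} 1$. Averaging over uniform $\sigma$ gives $\mathbb{E}|\{i : \sigma_i = +1\}| = n/2$, and dividing by $n$ yields $R_D(\mathcal{H}) = 1/2$.

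The main obstacle is the norm constraint $\|w\|_2 \leq B$, which in principle could prevent us from pushing the $f(X_i)$ values to the two sides of $1/2$. The key observation that resolves it is that the construction only cares about the \emph{signs} of $w^T X_i$: any positive multiple of $w^*$ works for the qualitative behavior, and saturation of the sigmoid is then controlled by the unconstrained parameter $a$, which can be taken as large as needed. Thus the scaling by $\alpha = \min(1, B/C)$ respects the norm ball while leaving shattering intact.
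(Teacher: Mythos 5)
Your proof is correct and takes essentially the same approach as the paper's: you solve a linear system to make the signs of $w^T X_i$ match the chosen pattern, rescale to stay inside the norm ball, and then exploit the unconstrained parameter $a$ to push the sigmoid outputs to opposite sides of a threshold. The paper phrases the lower bound via a limit $\lambda \to -\infty$ and leaves the matching upper bound as ``follows easily,'' whereas you make the shattering and the $\{0,1\}$-valued upper bound explicit — a modest gain in clarity but not a different argument.
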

The proof can be found in the appendix. In the case of SVM, the dimensionality of the feature space is usually much larger than the training dataset size (this is especially true for kernel-SVM). In this situation, we can no longer verify the calibration property using only the training data, and we have to keep a separate validation dataset to calibrate the classifier (as suggested by Platt (1999))\eat{\cite{platt1999probabilistic}}. When verifying the calibration of classifier on a validation dataset. The hypothesis class $\mathcal{F} = \{f\}$, and it is easy to verify that $\mathbb{E}_D R_D(\mathcal{H})$ is $O(\sqrt{\log n/n})$ in this case. Therefore, with enough validation data, we can still bound the calibration measure.

\subsubsection{Implications on Learning Algorithm Design}

Standard conditional probability estimation usually maximizes the likelihood to find the best classifier within the hypothesis space. However, since we can only guarantee the conditional probability estimates to be calibrated under the agnostic assumption, any calibrated classifier is essentially as good as the maximum likelihood estimation in terms of interpretability. Therefore, likelihood maximization is not necessarily the only method for estimating conditional probabilities.

There are other loss functions that are already widely used for binary classification. For example, hinge loss is at the foundation of large margin classifiers. Based on our discussion in this paper, we believe that these loss functions can also be used for conditional probability estimation. For example, Theorem~\ref{thm_main} suggests the following constrained optimization problem:
$$ \min \mathcal{L}(f, D) \quad s.t. \quad c_{emp}(f, D) = 0 $$
where $\mathcal{L}(f, D)$ is the loss function we want to minimize. By optimizing over the space of empirically calibrated classifiers, we can ensure that the resulting classifier is also calibrated with respect to $\mathcal{P}$. 

In fact, the conditional probability estimation algorithm developed by Kakade et al. (2011) already implicitly follows this framework (more elaboration on this point can be found in the appendix). 
We believe that many more interesting algorithms can be developed along this direction.

\subsubsection{Connection to the Calibration Problem}\label{sec_calibration_alg}

Suppose that we are given an uncalibrated fuzzy classifier $f_0: \mathcal{X} \rightarrow [0,1]$, and we want to find a function $g$ from $[0,1]$ to $[0,1]$, so that $g \circ f_0$ presents a better conditional probability estimation. This is the problem of classifier calibration, which has been studied in many papers (Zadrozny and Elkan, 2002) (Platt, 1999)\eat{~\cite{zadrozny2001obtaining, platt1999probabilistic}}.

Traditionally, calibration algorithms find the best link function $g$ by maximizing likelihood or minimizing squared loss. In this paper, we suggest a different approach to the calibration problem. We can find the best $g$ by minimizing the empirical calibration measure $c_{\text{emp}}(g \circ f_0)$. Let us assume w.l.o.g. that the training dataset $D = \{(x_1, y_1), \ldots, (x_n, y_n)\}$ satisfies 
$$g(f_0(x_1)) \leq \ldots \leq g(f_0(x_n))$$
Then we have, 
\begin{align*}
& c_{\text{emp}}(g \circ f_0, D) \\ = & \frac{1}{n} \sup_{p_1, p_2} | \sum_{i=1}^n \mathds{1}_{p_1 < g(f_0(x_i)) \leq p_2} (\mathds{1}_{y_i=1} - g(f_0(x_i))) | \\
\leq & \frac{1}{n} \max_{a,b} | \sum_{a<i\leq b} (\mathds{1}_{y_i=1} - g(f_0(x_i))) |
\end{align*}

This expression can be used as the objective function for calibration: we search over the space of hypothesis $\mathcal{G}$ to find a function $g$ that minimizes this objective function. Compared to other loss functions, the benefits of minimizing this objective function is that the resulting classifier is more likely to be calibrated, and therefore provides more interpretable conditional probability estimates. 

In fact, one of the most well-known calibration algorithms, the isotonic regression algorithm, can be viewed as minimizing this objective function:

\begin{claim}\label{clm_PAV_optimality}
	Let $\mathcal{G}$ be the set of all continuous nondecreasing functions from $[0,1]$ to $[0,1]$. Then the optimal solution that minimizes the squared loss $\min_{g \in \mathcal{G}} \sum_{i=1}^n (\mathds{1}_{y_i=1} - g(f_0(x_i)))^2$ also minimizes $\min_{g \in \mathcal{G}} \max_{a,b} | \sum_{a<i\leq b} (\mathds{1}_{y_i=1} - g(f_0(x_i))) |$
\end{claim}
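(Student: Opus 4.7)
The plan is to recast both objective functions in terms of cumulative sums along the (sorted) data, invoke the classical characterization of the isotonic regression optimum as the greatest convex minorant (GCM) of those sums, and then exploit the extremal property of this minorant to prove optimality for the calibration objective as well.

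First, I would reindex so that $f_0(x_1)\leq\cdots\leq f_0(x_n)$ and set $y'_i = \mathds{1}_{y_i=1}$, $S_k = \sum_{i=1}^k y'_i$, $G_k = \sum_{i=1}^k g(f_0(x_i))$, with $S_0=G_0=0$. Writing $h_k = S_k - G_k$, the inner sum in the claim is
$$\sum_{a<i\leq b}\bigl(\mathds{1}_{y_i=1}-g(f_0(x_i))\bigr) = h_b - h_a,$$
so the calibration objective equals $\max_k h_k - \min_k h_k$. Because $g$ is nondecreasing and the $f_0(x_i)$ are sorted, the sequence $g(f_0(x_i))$ is nondecreasing, so $k \mapsto G_k$ is convex (in its piecewise linear interpolation).

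Second, I would invoke the classical fact that the squared-loss minimizer $g^\star$ has cumulative sum $G^\star_k$ equal to the value at $k$ of the greatest convex minorant of $\{(k,S_k)\}_{k=0}^n$. Consequently $G^\star_k \leq S_k$ for every $k$, with equality at $k=0$, $k=n$, and at every PAV block boundary; so $h^\star_k \geq 0$ and $\min_k h^\star_k = 0$, and the calibration objective achieved by $g^\star$ is simply $\max_k h^\star_k$.

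Third, for any competing $g\in\mathcal{G}$ with cumulative sum $G'_k$ and $h'_k = S_k - G'_k$, set $m = \min_k h'_k$. The shifted function $G'_k + m$ is still convex (same slopes) and pointwise $\leq S_k$ by the definition of $m$, so by the maximality of the GCM we get $G'_k + m \leq G^\star_k$, hence $h'_k \geq h^\star_k + m$ for every $k$. Taking the maximum over $k$ yields $\max_k h'_k - \min_k h'_k \geq \max_k h^\star_k$, which is exactly the required optimality. I expect the main obstacle to be the bookkeeping around continuity and domain: the claim requires $g \in \mathcal{G}$ to be continuous on all of $[0,1]$, whereas the PAV output is naturally only defined (and piecewise constant) on the $f_0$-values. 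One must observe that the calibration objective depends on $g$ only through $\{g(f_0(x_i))\}$, so any continuous nondecreasing interpolation of the PAV values lies in $\mathcal{G}$ and attains the same objective value; conversely, for any $g \in \mathcal{G}$ the induced sequence at the data points is nondecreasing and in $[0,1]$, so the GCM argument applies uniformly across $\mathcal{G}$.
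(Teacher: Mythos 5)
Your proof is correct and takes a genuinely different route from the paper's. Both arguments rest on the same two classical facts --- that the cumulative sums $G_k = \sum_{i \le k} g(f_0(x_i))$ of a nondecreasing $g$ form a convex sequence, and that the squared-loss minimizer's cumulative sums $G^*_k$ equal the greatest convex minorant (GCM) of the points $(k, S_k)$ --- but they deploy them differently. The paper recasts the calibration objective as a linear program in $(\xi_1, \xi_2, z)$, notes that the PAV solution has $\xi_2^* = 0$, locates the index $i$ at which $S_i - Z^*_i$ is maximal, identifies the PAV block $[j,k]$ containing $i$, and compares any feasible competitor $Z$ against the chord from $(j,S_j)$ to $(k,S_k)$ using convexity at that one point; plugging in the LP constraints yields $\xi_1^* \le \xi_1 + \xi_2$. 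You avoid the LP and the local chord computation entirely: you shift the competitor's cumulative sum $G'$ down by $m = \min_k (S_k - G'_k) \le 0$ to produce another convex minorant of $S$, invoke the pointwise maximality of the GCM to obtain $G'_k + m \le G^*_k$ for every $k$, and read off the range inequality $\max_k h'_k - \min_k h'_k \ge \max_k h^*_k$ in one line. Your version is more global and arguably cleaner --- the key inequality holds uniformly in $k$ rather than only at the PAV-extremal index, which makes the extremal role of the GCM transparent --- at the modest cost of invoking the ``greatest'' in the GCM characterization as a black box, whereas the paper gets by with only $Z^* \le S$ plus equality at block boundaries and convexity of competitors. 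Your closing remarks on continuity and domain bookkeeping (the objective depends on $g$ only through the values $g(f_0(x_i))$, and the PAV outputs admit a continuous nondecreasing interpolation in $\mathcal{G}$) correctly dispose of the only formal gap in translating between finite sequences and the function class $\mathcal{G}$.
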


The proof can be found in the appendix. Using this connection we proved several interesting properties of the isotonic regression algorithm, which can also be found in the appendix.
\section{Empirical behavior of the calibration measure}\label{sec_experiment}

In this section, we conduct some preliminary experiments to demonstrate the behavior of the calibration measure on some common algorithms. We use two binary classification datasets from the UCI Repository\footnote{These datasets are chosen from the datasets used in Niculescu-Mizil and Caruana's work (2005). We only used two datasets because the experiments are only explorative (i.e., identifying potential properties of the calibration measure). More rigorous experiments are needed to formally verify these properties.}: 
ADULT\footnote{https://archive.ics.uci.edu/ml/datasets/Adult} and COVTYPE\footnote{https://archive.ics.uci.edu/ml/datasets/Covertype}. COVTYPE has been converted to a binary classification problem by treating the largest class as positive and the rest as negative. Five algorithms have been used in these experiments: naive Bayes(NB), boosted decision trees, SVM\footnote{For SVM and boosting, we rescale the output score to $[0,1]$ by $(x - \min) / (\max - \min)$ as in Niculescu-Mizil and Caruana's paper (2005)\eat{~\cite{niculescu2005predicting}}}
, logistic regression(LR), random forest(RF). 

\begin{figure}[h]
	\centering
	\includegraphics[width = 8cm, height = 3cm]{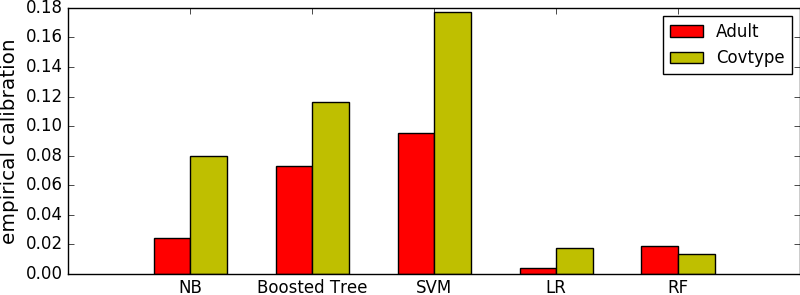}
	\caption{The empirical calibration error}\label{fig_cemp}
\end{figure}

Figure~\ref{fig_cemp} shows the empirical calibration error $c_{emp}$ on test datasets for all methods. From the experimental results, it appears that Logistic Regression and Random Forest naturally produce calibrated classifiers, which is intuitive as we discussed in the paper. The calibration measure of Naive Bayes seems to be depending on the dataset. For large margin methods (SVM and boosted trees), the calibration measures are high, meaning that they are not calibrated (on these two datasets). 

There is also an interesting connection between the calibration error and the benefit of applying a calibration algorithm, which is illustrated in Figure~\ref{fig_ratio}. In this experiment, we used a loss parameter $p$ to control the asymmetric loss: each false negative incurs $1 - p$ cost and each false positive incurs $p$ cost. All the algorithms are first trained on the training dataset, then calibrated on a separate validation set of size $2000$ using isotonic regression. For each algorithm, we compute the prior-calibration and post-calibration average losses on the testing dataset using the following decision rule: For each data point $X$, we predict $Y=1$ if and only if we predict that $\mathbf{Pr}(Y=1|X) \geq p$. Finally, we report the ratio between two losses:
$$ \text{loss ratio} = \frac{\text{the average loss after calibration}}{\text{the average loss before calibration}} $$
\begin{figure}[h]
	\centering
	\subfigure[Adult]{
		\includegraphics[width = 5cm, height = 3cm]{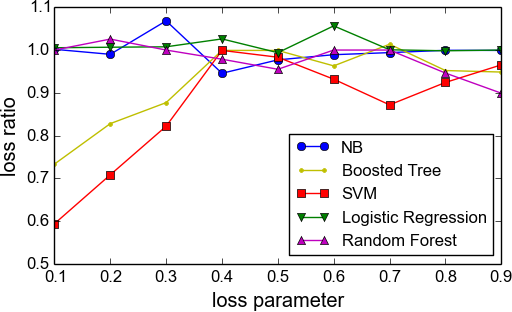}
		\label{fig_adult}
	}
	\subfigure[Covtype]{
		\includegraphics[width = 5cm, height = 3cm]{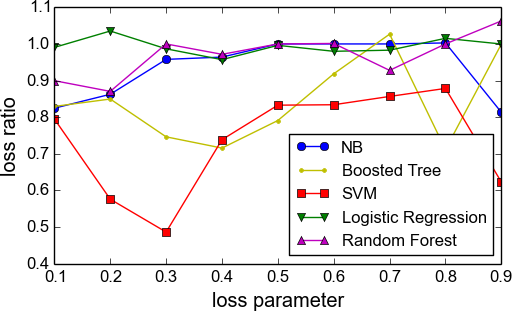}
		\label{fig_covtype}
	}
	\caption{The loss ratio on two datasets}\label{fig_ratio}
\end{figure}
As we can see in the Figure~\ref{fig_ratio}, the calibration procedure on average reduces the cost by 3\%-5\% for naive Bayes and random forest, 20\% for SVM, 12\% for boosted trees, and close to 0\% for logistic regression. Comparing with the results in Figure~\ref{fig_cemp}, the two algorithms that benefit most from calibration (i.e., SVM and boosted trees) also has high empirical calibration error. This result suggests that if an algorithm already has a low calibration error to begin with, then it is not likely to benefit much from the calibration process. This finding could potentially help us decide whether we need to calibrate the current classifier using isotonic regression (Niculescu-Mizil and Caruana, 2005).
\section{Conclusion}\label{sec_conclusion}

In this paper, we discussed the interpretability of conditional probability estimates under the agnostic assumption. We proved that it is impossible to upper bound the $l_1$ error of conditional probability estimates under such scenario. Instead, we defined a novel measure of calibration to provide interpretability for conditional probability estimates. The uniform convergence result between the measure and its empirical counterpart allows us to empirically verify the calibration property without making any assumption on the underlying distribution: the classifier is (almost) calibrated if and only if the empirical calibration measure is low. Our result provides new insights on conditional probability estimation: ensuring empirical calibration is already sufficient for providing interpretable conditional probability estimates, and thus many other loss functions (e.g., hinge loss) can also be utilized for estimating conditional probabilities. 


\section*{Acknowledgement}

We thank the anonymous reviewers for their valuable feedback. We acknowledge support from grant IIS-1513407 and IIS-1633755 awarded by the NSF, grant 1U54GM114838 awarded by NIGMS and 3U54EB020406-02S1 awarded by NIBIB through funds provided by the trans-NIH Big Data to Knowledge (BD2K) initiative (www.bd2k.nih.gov), and funds from  Adobe, Google, the Sloan Foundation, and the Siebel Energy Institute. The content is solely the responsibility of the authors and does not necessarily represent the official views of the funding agencies and organizations.

\bibliographystyle{abbrv}
\bibliography{paper}

\nocite{*}
\appendixContent{
	\newpage
	\textit{ }
	\newpage
	\appendix
	\section*{Appendix}

\subsection*{Experimental Simulation of Example~\ref{example_logistic}}

Here we experimentally simulate Example~\ref{example_logistic} to illustrate that logistic regression classifier has large $l_1$ error. We use Latent Dirichlet Allocation (LDA) (Blei et al., 2003), the state of the art generative model for documents, to generate datasets. The detailed experiment settings are listed below:

\begin{itemize}
	\item
	The dataset consists of $20000$ documents, the number of topics is $20$, the dictionary size is $1000$, and the average number of words in each document is $200$.
	\item
	We use the non-informative Dirichlet prior $\alpha = (1,1,\ldots,1)$ over topics. The word distribution in each topic follows power law with a random order among words.
	\item
	For each document, we randomly sample with replacement $10$ topic labels from the topic distribution.
\end{itemize}

Table~\ref{tbl_logistic} reports the mean experiment results and the standard deviation across five runs. For reference we also include the relative frequency of labels, and the $l_1$ error achieved by the trivial classifier that always output the global relative frequency of labels as conditional probability.

\begin{table}[h]
	\centering
	\small
	\begin{tabular}{|c|c|}
		\hline
		Average $l_1$ Error & Empirical Calibration \\ 
		\hline
		$0.1270 \pm 0.0008$ & $0.0083 \pm 0.0003$ \\ 
		\hline
		Trivial $l_1$ Error & Frequency of Labels\\
		\hline
		$0.2022 \pm 0.0001$ & $0.3448 \pm 0.0001$ \\
		\hline
	\end{tabular}
	\caption{$L_1$ error and empirical calibration}\label{tbl_logistic}
\end{table}

As we can see from Table~\ref{tbl_logistic}, the logistic regression only achieves $0.13$ average $l_1$ error, while even the trivial classifier can achieve $0.2$. This implies that logistic regression performed very badly in this example. However, as we can see from Table~\ref{tbl_logistic}, the empirical calibration measure of logistic regression classifier is relatively low ($0.01$), indicating that the classifier is almost calibrated. 

\subsection*{Proof of Theorem~\ref{thm_nobound}}

\begin{proof}

The proof relies on the following lemma:
\begin{lemma}
Let $\mathcal{P}$ be a distribution over $\mathcal{X} \times \mathcal{Y}$. Let $D$ be a size $n$ i.i.d. sample set from $\mathcal{P}$. Let $V$ be a verifier of $\mathcal{P}$ given $D$ (i.e., $V$ is a function from $\{\mathcal{X} \times \mathcal{Y}\}^n$ to $\{0, 1\}$), such that
\begin{enumerate}
\item
With probability at least $1 - \delta_1$, a dataset $D$ with $n$ i.i.d. samples from $\mathcal{P}$ will pass $V$:
$$ \mathbf{Pr}_D(V(D) = 1) \geq 1 - \delta_1 $$
\item
With probability at least $1 - \delta_2$, a dataset $D$ with $n$ i.i.d. samples from $\mathcal{P}$ satisfies:
$$ \mathbf{Pr}(\forall i \not= j, X_i \not= X_j) \geq 1 - \delta_2 $$
\end{enumerate}

Then there exists another probability distribution $\mathcal{P}'$ such that:
\begin{enumerate}
\item
With probability at least $1 - \delta_1 - \delta_2$, a data $D'$ with $n$ i.i.d. samples from $\mathcal{P}'$ will also pass $V$.
$$ \mathbf{Pr}_{D'}(V(D') = 1) \geq 1 - \delta_1 - \delta_2 $$
\item
$$ \forall X \in \mathcal{X}, \sum_{Y \in \mathcal{Y}} \mathcal{P}(X, Y) =  \sum_{Y \in \mathcal{Y}} \mathcal{P}'(X, Y) $$
\item
$$ \forall X \in \mathcal{X}, \mathcal{P}'(Y=1|X) = 0 \textit{  or  } 1 $$
\end{enumerate}
\end{lemma}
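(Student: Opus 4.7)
The plan is to construct $\mathcal{P}'$ via a probabilistic averaging argument that "derandomizes" the conditional label distribution of $\mathcal{P}$. Specifically, I would consider a random labeling $\pi : \mathcal{X} \to \{-1, +1\}$ obtained by sampling, independently for every $x \in \mathcal{X}$, $\pi(x) = 1$ with probability $\mathcal{P}(Y=1 \mid X=x)$ and $\pi(x) = -1$ otherwise. Given $\pi$, define $\mathcal{P}'_\pi$ to be the distribution on $\mathcal{X} \times \mathcal{Y}$ with the same $\mathcal{X}$-marginal as $\mathcal{P}$ and deterministic conditional label $Y = \pi(X)$. By construction each $\mathcal{P}'_\pi$ automatically satisfies conditions (2) and (3) of the conclusion; the whole task is to exhibit a particular $\pi$ for which condition (1) also holds.

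Next I would set up a coupling between $D \sim \mathcal{P}^n$ and $D' \sim (\mathcal{P}'_\pi)^n$. Draw the feature parts $X_1, \ldots, X_n$ once from the common marginal $\mathcal{Q}$, then independently (i) sample $Y_i \sim \mathcal{P}(Y \mid X_i)$ to obtain $D$, and (ii) sample $\pi$ as above and set $Y_i' = \pi(X_i)$ to obtain $D'$. The key observation is that on the event $E = \{\forall i \neq j, X_i \neq X_j\}$, the labels $\pi(X_1), \ldots, \pi(X_n)$ are mutually independent Bernoulli variables with the very same parameters $\mathcal{P}(Y=1 \mid X_i)$ that govern $Y_1, \ldots, Y_n$. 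Consequently, the joint law of $D'$ restricted to $E$ equals the joint law of $D$ restricted to $E$; in particular
\begin{align*}
\mathbb{E}_\pi\bigl[\mathbf{Pr}_{D' \sim (\mathcal{P}'_\pi)^n}(V(D')=1)\bigr] &\geq \mathbf{Pr}(V(D)=1 \text{ and } E) \\
&\geq 1 - \delta_1 - \delta_2,
\end{align*}
where the last step uses the two hypotheses on $V$ together with a union bound.

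By an averaging/pigeonhole argument, there must exist a specific realization $\pi^*$ of the random labeling for which $\mathbf{Pr}_{D' \sim (\mathcal{P}'_{\pi^*})^n}(V(D')=1) \geq 1 - \delta_1 - \delta_2$; taking $\mathcal{P}' := \mathcal{P}'_{\pi^*}$ finishes the proof. The only real subtlety is justifying the distributional equality of $D$ and $D'$ on the event $E$: it relies crucially on $X_i$ being pairwise distinct, because otherwise a shared feature $X_i = X_j$ forces $Y_i' = Y_j'$ under $\mathcal{P}'_\pi$ (the labels are no longer conditionally independent given the features), breaking the coupling. This is precisely why the collision-probability bound $\delta_2$ enters the statement, and handling this conditioning carefully — i.e., expressing expectations over $\pi$ and over $D$ so that the event $E$ can be factored out cleanly — is the main technical point to get right.
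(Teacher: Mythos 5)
Your proposal is correct and follows essentially the same approach as the paper's own proof: both randomize the deterministic labeling $\pi$ by drawing $\pi(x) \sim \mathrm{Bernoulli}(\mathcal{P}(Y{=}1 \mid X{=}x))$ independently across $x$, observe that on the no-collision event the induced label law coincides with that of $\mathcal{P}$, lower-bound the averaged acceptance probability by $1-\delta_1-\delta_2$, and then apply the probabilistic method to extract a specific $\pi^*$. Your phrasing as an explicit coupling is a slightly cleaner exposition of the same computation the paper does by conditioning on $D_X'$.
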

\begin{proof}
First we construct the following distribution over all possible $\mathcal{P}'$ satisfying the last two conditions:
$$ \mathbf{Pr}(\mathcal{P}') = \prod_{X \in \mathcal{X}} Q(P'(Y=1|X), P(Y=1|X)) $$
where $Q(p', p)$ is defined as:
\begin{align*}
Q(p', p) = \left\{ \begin{array}{lr} p & p'
 = 1 \\ 1 - p & p' = 0 \end{array} \right.
\end{align*}

Now it is sufficient to show that if we sample $\mathcal{P}'$ according to the above distribution and then sample $D'$ from $\mathcal{P}'$, then with probability at least $1 - \delta_1 - \delta_2$, $D'$ will pass $V$. Assuming this is true, then at least one distribution $\mathcal{P}'$ have to satisfy the first condition, and thereby proved the existence of $\mathcal{P}'$.

To compute the probability that $D'$ would pass $V$, denote $D_X = \{X_1, X_2, \ldots, X_n\}$ and $D_Y = \{Y_1, Y_2, \ldots, Y_n\}$. Note that all $\mathcal{P}'$ has the same marginal distribution over $\mathcal{X}$, therefore:
\begin{align*} & \mathbf{Pr}_{\mathcal{P}', D'}(V(D') = 1) = \sum_{\mathcal{P}'} \mathbf{Pr}(\mathcal{P}') \sum_{D'} \mathbf{Pr}(D'|\mathcal{P}') V(D') \\
= & \sum_{D_X'} \mathbf{Pr}(D_X') \sum_{\mathcal{P}'} \mathbf{Pr}(\mathcal{P}') \sum_{D_Y'} \mathbf{Pr}(D_Y'|\mathcal{P}', D_X') V(D')
\end{align*}

We only consider all those $D_X'$ with distinct $X_i$ values. Based on the assumption, such $D_X'$ accounts for at least $1 - \delta_2$ of the probability mass. Now the important observation is that for every fixed $D_X'$ with distinct $X$ values, the marginal distribution of $D_Y'$ given $D_X'$ (i.e. marginalize over $\mathcal{P}'$) is exactly $\mathcal{P}(D_Y'|D_X')$, the distribution that we sample labels independently from $\mathcal{P}(Y|X)$ for each $X_i'$ in $D_X'$:
\begin{align*} & \sum_{D_X'} \mathbf{Pr}(D_X') \sum_{\mathcal{P}'} \mathbf{Pr}(\mathcal{P}') \sum_{D_Y'} \mathbf{Pr}(D_Y'|\mathcal{P}', D_X') V(D') \\
\geq & \sum_{D_X'} \mathbf{Pr}(D_X')  \mathds{1}_{\forall i \not= j, X_i' \not= X_j'} \sum_{D_Y'} \mathbf{Pr}(D_Y'|\mathcal{P}, D_X') V(D') 
\end{align*}
The latter probability is actually the probability that $D'$ will pass $V$ and have distinct $X$ values at the same time. Based on the assumptions in the lemma, it occurs with probability at least $1 - \delta_1 - \delta_2$. 
\end{proof}

Now given this lemma, the proof of Theorem~\ref{thm_nobound} is easy: We show that if any prover $A_f$ satisfies the two conditions in the theorem, it can be used as the verifier $V$ in the lemma such that no $\mathcal{P}'$ can satisfy all three conditions. 

Let $\delta_1 = \frac{1}{3}$, then the first assumption in the lemma is satisfied, also since $\forall x \in \mathcal{X}, \mathcal{Q}(x) < \frac{1}{10 n^2}$, we have:
$$ \forall i \not= j, \mathbf{Pr}(X_i = X_j) = \sum_{x} \mathcal{Q}(x)^2 \leq \frac{1}{10n^2} $$

By a union bound, we have:
$$ \mathbf{Pr}(\forall i \not=j, X_i \not= X_j) \geq \frac{9}{10} $$

Therefore we can set $\delta_2 = 0.1$. By the above lemma, there exists another $\mathcal{P}'$ such that
$$ \mathbf{Pr}_{D' \sim \mathcal{P}'}(A_f(D')) \geq 1 - \frac{1}{3} - \frac{1}{10} $$
and
$$ \forall X \in \mathcal{X}, Y \in \mathcal{Y}, \mathcal{P}'(X, Y) = 0 \textit{  or  } 1 $$
On the other hand, note that the $l_1$ distance between $\mathcal{P}'$ and $\mathcal{P}$ is at least $B$, then by the properties of $A_f$, $D'$ cannot pass $A_f$ with probability greater than $\frac{1}{3}$. This contradicts our earlier result. Therefore no such $A_f$ can exist.

\end{proof}

\subsection*{Proof of Claim~\ref{clm_bayes_decision}}
\begin{proof}
The expected loss is
$$ a \mathcal{P}(g(f(X)) = 1, Y = -1) + b \mathcal{P}(g(f(X)) = -1, Y = 1) $$
Define $ S = \{f(X) : X \in \mathcal{X}\} $, then we have:
\begin{align*}
& a \mathcal{P}(g(f(X)) = 1, Y = -1) + b \mathcal{P}(g(f(X)) = -1, Y = 1) \\
= & \sum_{p \in S} \sum_{X:f(X) = p} [ a \mathds{1}_{g(p) = 1} \mathcal{P}(Y = -1, X) + \\
  & b \mathds{1}_{g(p) = -1} \mathcal{P}(Y = 1, X) ] \\
= & \sum_{p \in S} [ a \mathds{1}_{g(p) = 1} \sum_{X:f(X) = p} \mathcal{P}(Y = -1, X) + \\
  & b \mathds{1}_{g(p) = -1} \sum_{X:f(X) = p} \mathcal{P}(Y = 1, X)]
\end{align*}
Therefore, the optimal $g^*$ has $g^*(p)=1$ if and only if:
$$ a \sum_{X:f(X) = p} \mathcal{P}(Y = -1, X) \leq b \sum_{X:f(X) = p} \mathcal{P}(Y = 1, X) $$
Which is equivalent as:
$$ a \mathcal{P}(Y = -1|f(X) = p) \leq b \mathcal{P}(Y = 1|f(X) = p) $$
Since $f$ is calibrated, $\mathcal{P}(Y = 1|f(X) = p) = p$, therefore $g^*(p) = 1$ if and only if $p \geq \frac{a}{a+b}$.
\end{proof}

\subsection*{Proof of Theorem~\ref{thm_main}}

\begin{proof}
We will use the following uniform convergence result (Shalev-Shwartz and Ben-David, 2014):

\begin{theorem}
Let $D$ be i.i.d.~samples of $(\mathcal{X} \times \mathcal{Y}, \mathcal{P})$, then with probability at least $1 - \delta$,
\begin{eqnarray}\label{eq_rademacher}
& \sup_{g \in \mathcal{G}} |\frac{1}{n} \sum_{i=1}^n g(x_i, y_i) - \mathbb{E} g(X,Y)| \nonumber \\
\leq & 2 \mathbb{E}_D R_D(\mathcal{G}) + \sqrt{\frac{2 \ln (4 / \delta)}{n}}
\end{eqnarray}
\end{theorem}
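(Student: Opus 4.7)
The plan is to flatten the supremum defining the calibration deviation into a uniform convergence statement over a single function class on $\mathcal{X}\times\mathcal{Y}$, decompose that class into two pieces, and apply the Rademacher-based uniform convergence bound of the appendix theorem to each piece after controlling the respective Rademacher complexities by $\mathbb{E}_D R_D(\mathcal{H})$.

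First I would flatten the nested suprema. Define
$$\mu(f,p_1,p_2) = \mathcal{P}(p_1 < f(X)\leq p_2,\,Y=1) - \mathbb{E}_X[\mathds{1}_{p_1 < f(X)\leq p_2}\,f(X)],$$
with empirical analogue $\mu_{\text{emp}}$ obtained by replacing $\mathbb{E}$ by $\tfrac{1}{n}\sum_i$. Then $c(f) = \sup_{p_1<p_2}|\mu|$ and $c_{\text{emp}}(f,D) = \sup_{p_1<p_2}|\mu_{\text{emp}}|$, so the reverse triangle inequality $|\sup|A|-\sup|B||\leq \sup|A-B|$ gives $|c(f)-c_{\text{emp}}(f,D)| \leq \sup_{p_1<p_2}|\mu-\mu_{\text{emp}}|$. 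Writing $\phi_{f,p_1,p_2}(x,y) = \mathds{1}_{p_1<f(x)\leq p_2}(\mathds{1}_{y=1}-f(x))$, we have $\mu-\mu_{\text{emp}} = \mathbb{E}\phi - \hat{\mathbb{E}}_D\phi$, so taking $\sup$ over $f$ reduces the theorem to bounding $\sup_{\phi\in\Phi}|\mathbb{E}\phi-\hat{\mathbb{E}}_D\phi|$ with high probability, where $\Phi = \{\phi_{f,p_1,p_2} : f\in\mathcal{F},\,p_1,p_2\in\mathbb{R}\}$.

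Next I would decompose $\phi = \phi^{(1)}-\phi^{(2)}$ with $\phi^{(1)}(x,y) = \mathds{1}_{p_1<f(x)\leq p_2}\mathds{1}_{y=1}$ and $\phi^{(2)}(x,y) = \mathds{1}_{p_1<f(x)\leq p_2}f(x)$, generating classes $\mathcal{G}_1,\mathcal{G}_2$. By the triangle inequality, $\sup_\Phi|\mathbb{E}\phi-\hat{\mathbb{E}}\phi| \leq \sup_{g\in\mathcal{G}_1}|\mathbb{E}g-\hat{\mathbb{E}}g| + \sup_{g\in\mathcal{G}_2}|\mathbb{E}g-\hat{\mathbb{E}}g|$. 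I would apply the stated Rademacher-based theorem to each class with confidence $\delta/2$ and union-bound, controlling both deviations with probability $\geq 1-\delta$ in terms of $\mathbb{E}_D R_D(\mathcal{G}_1)+\mathbb{E}_D R_D(\mathcal{G}_2)$ plus a concentration remainder that simplifies to a constant times $\sqrt{2\ln(8/\delta)/n}$, matching the form in the hypothesis.

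The key step, and the main obstacle, is to bound $\mathbb{E}_D R_D(\mathcal{G}_i)$ by $\mathbb{E}_D R_D(\mathcal{H})$ for $i=1,2$ while tracking constants carefully enough to match the stated hypothesis. For $\mathcal{G}_1$, the Rademacher sum vanishes outside the data-determined subset $T=\{i:Y_i=1\}$; conditioning on $\sigma_T$ and applying Jensen's inequality to the zero-mean tail $\sum_{i\notin T}\sigma_i h(X_i)$ (which sits inside the sup defining $R_D(\mathcal{H})$) gives $R_D(\mathcal{G}_1) \leq R_D(\mathcal{H})$. For $\mathcal{G}_2$, whose members are real-valued rather than $\{0,1\}$-valued, the direct subset argument fails, and this is the subtler case; I would use the integral identity
$$f(x)\mathds{1}_{p_1<f(x)\leq p_2} = \int_0^1 \mathds{1}_{\max(t,p_1) < f(x)\leq p_2}\,dt,$$
swap the supremum over $(f,p_1,p_2)$ with the integral via $\sup\int\leq\int\sup$, and observe that for every fixed $t$ the inner indicator is a member of $\mathcal{H}$ under the reparametrization $p_1' = \max(t,p_1)$, giving $R_D(\mathcal{G}_2) \leq R_D(\mathcal{H})$. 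Combining these two Rademacher bounds with the assumption $\mathbb{E}_D R_D(\mathcal{H}) + \sqrt{2\ln(8/\delta)/n} < \epsilon/2$ and simplifying finishes the proof.
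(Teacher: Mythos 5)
The statement you were asked to prove is the generic Rademacher-complexity uniform convergence bound, inequality~(\ref{eq_rademacher}): for a class $\mathcal{G}$ of $[0,1]$-valued functions on $\mathcal{X}\times\mathcal{Y}$, with probability at least $1-\delta$ the empirical averages are uniformly within $2\,\mathbb{E}_D R_D(\mathcal{G}) + \sqrt{2\ln(4/\delta)/n}$ of the true expectations. Your proposal never establishes this; it explicitly invokes it (``I would apply the stated Rademacher-based theorem to each class'') as a black box and then uses it to derive the paper's Theorem~\ref{thm_main}. As a proof of the quoted statement this is circular --- the inequality to be proved is precisely the tool you assume --- and nothing in your argument supplies the two ingredients it actually requires: a symmetrization (ghost-sample) step showing $\mathbb{E}_D \sup_{g\in\mathcal{G}}\bigl|\tfrac{1}{n}\sum_{i=1}^n g(x_i,y_i) - \mathbb{E}g(X,Y)\bigr| \le 2\,\mathbb{E}_D R_D(\mathcal{G})$, and a concentration step via McDiarmid's bounded-differences inequality (replacing one sample moves the supremum by at most $1/n$ since $g$ takes values in $[0,1]$), applied to each one-sided supremum and combined by a union bound, which is where the $4/\delta$ inside the logarithm comes from. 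The paper itself does not reprove this result; it cites Shalev-Shwartz and Ben-David (2014), so a correct treatment would either reproduce that standard symmetrization-plus-McDiarmid argument or simply cite it.

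For what it is worth, the argument you did write is essentially the paper's appendix proof of Theorem~\ref{thm_main}: the same reduction of $|c(f)-c_{\text{emp}}(f,D)|$ to two uniform-deviation terms, the same two auxiliary classes (the paper's $\mathcal{H}_1$ and $\mathcal{H}_2$), and the same identity $f(x)\mathds{1}_{p_1<f(x)\le p_2}=\int_0^1 \mathds{1}_{\max(t,p_1)<f(x)\le p_2}\,dt$ with the reparametrization $p_1'=\max(t,p_1)$ for the real-valued class; your conditioning-plus-Jensen handling of $\mathcal{G}_1$ is a workable substitute for the paper's random-sign identity $\mathds{1}_{y_i=1}=\mathbb{E}_{z_i}\max(z_i,y_i)$. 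But that is a proof of a different theorem than the one in question, and it only goes through once inequality~(\ref{eq_rademacher}) --- the statement you were supposed to prove --- is available.
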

In the following we sometimes allow $\mathcal{G}$ to be a collection of functions from $\mathcal{X}$ to $[0, 1]$ in the above results. When used in this sense, we assume that the function will not use $y$ label: $g(x, y) = g(x)$.

Define $\mathcal{F}_{D,p_1,p_2}(f)$ to be the relative frequency of event $\{p_1 < f(x) \leq p_2, y = 1\}$:
$$ \mathcal{F}_{D,p_1,p_2}(f) = \frac{1}{n} \sum_{i=1}^n \mathds{1}_{p_1 < f(x_i) \leq p_2, y_i = 1} $$

Define $\mathcal{F}_{\mathcal{P},p_1,p_2}(f)$ to be the probability of the same event:
$$ \mathcal{F}_{\mathcal{P},p_1,p_2}(f) = \mathcal{P}(p_1 < f(X) \leq p_2, Y = 1) $$

Define $\mathcal{E}_{D,p_1,p_2}(f)$ as the empirical expectation of $f(x) \mathds{1}_{p_1 < f(x) \leq p_2}$:
$$ \mathcal{E}_{D,p_1,p_2}(f) = \frac{1}{n} \sum_{i=1}^n f(x_i) \mathds{1}_{p_1 < f(x_i) \leq p_2} $$

Define $\mathcal{E}_{\mathcal{P},p_1,p_2}(f)$ as the expectation of the same function:
$$ \mathcal{E}_{\mathcal{P},p_1,p_2}(f) = \mathbb{E}[f(X) \mathds{1}_{p_1 < f(X) \leq p_2}] $$

When the context is clear, subscripts $p_1$ and $p_2$ can be dropped. Using these notations, we can rewrite $c(f)$ and $c_{\text{emp}}(f,D)$ as follows: 
$$ c(f) = \sup_{p_1, p_2} |\mathcal{F}_\mathcal{P}(f) - \mathcal{E}_\mathcal{P}(f) | $$
$$ c_{\text{emp}}(f) = \sup_{p_1, p_2} |\mathcal{F}_D(f) - \mathcal{E}_D(f)| $$
	
Note that:
\begin{align*}
& |\sup_{p_1, p_2} |\mathcal{F}_D(f) - \mathcal{E}_D(f) | - \sup_{p_1, p_2} |\mathcal{F}_S(f) - \mathcal{E}_S(f)| | \\
\leq & \sup_{p_1, p_2} | |\mathcal{F}_D(f) - \mathcal{E}_D(f) | - |\mathcal{F}_S(f) - \mathcal{E}_S(f)| | \\  
\leq & \sup_{p_1, p_2} |\mathcal{F}_D(f) - \mathcal{E}_D(f) - \mathcal{F}_S(f) + \mathcal{E}_S(f)|  \\
\leq & \sup_{p_1, p_2} (|\mathcal{F}_D(f) - \mathcal{F}_S(f)| + |\mathcal{E}_D(f) - \mathcal{E}_S(f)|) \\ 
\leq & \sup_{p_1, p_2} |\mathcal{F}_D(f) - \mathcal{F}_S(f)| + \sup_{p_1,p_2} |\mathcal{E}_D(f) - \mathcal{E}_S(f)| 
\end{align*}

Therefore it suffices to show that
\begin{align*}
\mathbf{P} (& \sup_{f,p_1,p_2} |\mathcal{F}_D(f) - \mathcal{F}_S(f)| + \\
& \sup_{f,p_1,p_2} |\mathcal{E}_D(f) - \mathcal{E}_S(f)| > \epsilon) < \delta
\end{align*}

Define
\begin{align*}
\mathcal{H}_1 & = \{\mathds{1}_{p_1 < f(x) \leq p_2, y=1}: p_1, p_2 \in \mathbb{R}, f \in \mathcal{F}\} \\
\mathcal{H}_2 & = \{f(x)\mathds{1}_{p_1 < f(x) \leq p_2}: p_1, p_2 \in \mathbb{R}, f \in \mathcal{F}\} 
\end{align*}

Then we have the following lemma:
\begin{lemma}
Let $\mathcal{H}_1, \mathcal{H}_2$ as defined above, then:
$$ R_D(\mathcal{H}_1) \leq R_D(\mathcal{H}) \quad R_D(\mathcal{H}_2) \leq R_D(\mathcal{H}) $$
\end{lemma}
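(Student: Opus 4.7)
The plan is to handle the two inequalities $R_D(\mathcal{H}_1) \leq R_D(\mathcal{H})$ and $R_D(\mathcal{H}_2) \leq R_D(\mathcal{H})$ separately, in each case reducing the defining supremum to one over $\mathcal{H}$ by exhibiting every element of $\mathcal{H}_1$ or $\mathcal{H}_2$ as a simple deformation (or a mixture of elements) of $\mathcal{H}$. For $\mathcal{H}_1$ the reduction will be combinatorial: the label factor $\mathds{1}_{y=1}$ freezes a deterministic subset of coordinates. For $\mathcal{H}_2$ it will be analytic, based on a layer-cake decomposition of the factor $f(x)$ into indicators that already lie in $\mathcal{H}$.

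For $\mathcal{H}_1$, I would first observe that $\sum_{i=1}^n \sigma_i \mathds{1}_{p_1 < f(x_i) \leq p_2,\, y_i = 1} = \sum_{i \in S} \sigma_i \mathds{1}_{p_1 < f(x_i) \leq p_2}$, where $S = \{i : y_i = 1\}$ is a fixed subset of indices determined by $D$. Splitting $\sigma = (\sigma_S, \sigma_{S^c})$ and applying Jensen's inequality to the convex functional $\sup$, for any fixed $\sigma_S$ we have $\mathbb{E}_{\sigma_{S^c}} \sup_{h \in \mathcal{H}} \sum_{i=1}^n \sigma_i h(x_i) \geq \sup_{h \in \mathcal{H}} \mathbb{E}_{\sigma_{S^c}} \sum_{i=1}^n \sigma_i h(x_i) = \sup_{h \in \mathcal{H}} \sum_{i \in S} \sigma_i h(x_i)$, since $\mathbb{E}[\sigma_i] = 0$ for $i \in S^c$. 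Taking the expectation over $\sigma_S$ on both sides then yields $n R_D(\mathcal{H}) \geq n R_D(\mathcal{H}_1)$.

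For $\mathcal{H}_2$, I would use the layer-cake identity $f(x) = \int_0^1 \mathds{1}_{t < f(x)}\, dt$, valid because $f(x) \in [0,1]$, to rewrite $f(x) \mathds{1}_{p_1 < f(x) \leq p_2} = \int_0^1 \mathds{1}_{\max(p_1, t) < f(x) \leq p_2}\, dt$. For each $t \in [0, 1]$ the integrand is itself a member of $\mathcal{H}$ (with $p_1$ replaced by $\max(p_1, t)$; when $\max(p_1, t) \geq p_2$ it degenerates to the zero function, which still lies in $\mathcal{H}$ via degenerate parameter choices). Swapping the finite sum $\sum_i \sigma_i(\cdot)$ with the integral and bounding each integrand pointwise in $t$ by $\sup_{h \in \mathcal{H}} \sum_i \sigma_i h(x_i)$, I then integrate over $[0,1]$ to get $\sup_{g \in \mathcal{H}_2} \sum_i \sigma_i g(x_i) \leq \sup_{h \in \mathcal{H}} \sum_i \sigma_i h(x_i)$ pointwise in $\sigma$; taking expectation completes $R_D(\mathcal{H}_2) \leq R_D(\mathcal{H})$.

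The main obstacle is conceptual rather than computational --- picking the right representation so that the supremum over the enlarged class collapses to one over $\mathcal{H}$. The only minor technical point is to confirm that the zero function sits inside $\mathcal{H}$ (so that the $\mathcal{H}_2$ bound survives the degenerate values of $t$), which is immediate because the defining parameters $p_1, p_2$ range over all of $\mathbb{R}$ with no order constraint. No concentration or covering-number machinery is needed: both reductions are pointwise in the Rademacher sign pattern and hand control back to the supremum defining $R_D(\mathcal{H})$.
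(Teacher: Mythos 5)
Your proof is correct. The argument for $\mathcal{H}_2$ is essentially the same as the paper's: both use the layer-cake identity $f(x)=\int_0^1 \mathds{1}_{t<f(x)}\,dt$, swap the sum with the integral, observe that the integrand $\mathds{1}_{\max(p_1,t)<f(x)\leq p_2}$ lies in $\mathcal{H}$, and then pull the supremum inside the integral. For $\mathcal{H}_1$ you take a cleaner route than the paper. The paper rewrites $\mathds{1}_{y_i=1}=\mathbb{E}_{z_i\in\{\pm 1\}}\max(z_i,y_i)$, moves this auxiliary expectation outside the supremum by Jensen, and then observes that $t_i=\sigma_i\max(z_i,y_i)$ is again Rademacher and independent of $y_i$. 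You instead observe directly that the factor $\mathds{1}_{y_i=1}$ restricts the Rademacher sum to the fixed index set $S=\{i:y_i=1\}$, and that dropping the remaining coordinates can only decrease the Rademacher average, which you justify by conditioning on $\sigma_S$, applying Jensen over $\sigma_{S^c}$, and using $\mathbb{E}[\sigma_i]=0$. Both arguments rest on the same two ingredients --- Jensen's inequality applied to the supremum, and centering of Rademacher variables --- but yours dispenses with the auxiliary randomization and the $\max(z_i,y_i)$ bookkeeping, making the reduction more transparent. What the paper's version buys, in exchange, is a form that stays over all $n$ coordinates throughout and lands directly on the unrestricted $R_D(\mathcal{H})$ without an explicit restriction/re-expansion step; this is purely presentational. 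One small point worth stating explicitly in your write-up (you note it only for $\mathcal{H}_2$): the degenerate choice $p_1\geq p_2$ puts the zero function in $\mathcal{H}$, which is also what makes $\sup_h\sum_{i\in S}\sigma_i h(x_i)\geq 0$ in the $\mathcal{H}_1$ argument and keeps all quantities nonnegative as expected.
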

\begin{proof}
For $R_D(\mathcal{H}_1)$, we have:
\begin{align*}
 & R_D(\mathcal{H}_1) \\
= & \frac{1}{n} \mathbb{E}_{\sigma \sim \{\pm 1\}^n} [ \sup_{p_1, p_2, f} \sum_{i=1}^n \sigma_i \mathds{1}_{p_1 < f(x_i) \leq p_2, y_i=1} ]\\
= & \frac{1}{n} \mathbb{E}_{\sigma \sim \{\pm 1\}^n} [ \sup_{p_1, p_2, f} \sum_{i=1}^n \sigma_i \mathds{1}_{p_1 < f(x_i) \leq p_2} \mathbb{E}_{z_i \in \{\pm 1\}} \max(z_i, y_i) ]\\
\leq & \frac{1}{n} \mathbb{E}_{\sigma, z \sim \{\pm 1\}^n} [\sup_{p_1, p_2, f} \sum_{i=1}^n \mathds{1}_{p_1 < f(x_i) \leq p_2} \sigma_i \max(z_i, y_i)] \\
= & \frac{1}{n} \mathbb{E}_{t \sim \{\pm 1\}^n} [\sup_{p_1, p_2, f} \sum_{i=1}^n t_i \mathds{1}_{p_1 < f(x_i) \leq p_2} ]  \\
= & R_D(\mathcal{H})
\end{align*}
where the last step is because $t_i = \sigma_i \max(z_i, y_i)$ is uniformly distributed over $\{\pm 1\}$ independent of the value of $y_i$. 

For $R_D(\mathcal{H}_2)$, we have:
\begin{align*}
& R_{S_n}(\mathcal{H}_2) \\
= & \frac{1}{n} \mathbb{E}_{\sigma \sim \{\pm 1\}^n} [ \sup_{p_1, p_2, f} \sum_{i=1}^n \sigma_i f(x_i) \mathds{1}_{p_1 < f(x_i) \leq p_2} ]\\
= & \frac{1}{n} \mathbb{E}_{\sigma \sim \{\pm 1\}^n} [ \sup_{p_1, p_2, f} \int_0^1 \sum_{i=1}^n \sigma_i \mathds{1}_{t < f(x_i)} \mathds{1}_{p_1 < f(x_i) \leq p_2} dt ]\\
\leq & \frac{1}{n} \mathbb{E}_{\sigma \sim \{\pm 1\}^n} \int_0^1 [\sup_{p_1, p_2, f} \sum_{i=1}^n \sigma_i \mathds{1}_{\max(p_1,t) < f(x_i) \leq p_2} ] dt \\
= & \frac{1}{n} \mathbb{E}_{\sigma \sim \{\pm 1\}^n} \int_0^1 [\sup_{p_1' \geq t, p_2, f} \sum_{i=1}^n \sigma_i \mathds{1}_{p_1' < f(x_i) \leq p_2} ] dt \\
\leq & \frac{1}{n} \mathbb{E}_{\sigma \sim \{\pm 1\}^n} \int_0^1 [\sup_{p_1', p_2, f} \sum_{i=1}^n \sigma_i \mathds{1}_{p_1' < f(x_i) \leq p_2} ] dt \\
= & \frac{1}{n} \mathbb{E}_{\sigma \sim \{\pm 1\}^n} [\sup_{p_1, p_2, f} \sum_{i=1}^n \sigma_i \mathds{1}_{p_1' < f(x_i) \leq p_2} ] \\
= & R_D(\mathcal{H})
\end{align*}
where the second step is due to $f(x) = \int_0^1 \mathds{1}_{t < f(x)} dt$, and the forth step is just substituting $\max(p_1, t)$ with $p_1'$. Since there is no constraint on $p_1$, the $p_1'$ can take any value greater than or equal to $t$.
\end{proof}

Combining this lemma with the assumptions in the theorem:
\begin{align*}
 \mathbb{E}_D R_D(\mathcal{H}_1) + \sqrt{\frac{2 \ln (8/\delta)}{n}} < \frac{\epsilon}{2} \\
 \mathbb{E}_D R_D(\mathcal{H}_2) + \sqrt{\frac{2 \ln (8/\delta)}{n}} < \frac{\epsilon}{2}
\end{align*}

By Equation~(\ref{eq_rademacher}):
\begin{align*}
\mathbf{P}(\sup_{f, p_1, p_2} |\mathcal{F}_D(f) - \mathcal{F}_S(f)| > \frac{\epsilon}{2}) < \frac{\delta}{2} \\
\mathbf{P}(\sup_{f, p_1, p_2} |\mathcal{E}_D(f) - \mathcal{E}_S(f)| > \frac{\epsilon}{2}) < \frac{\delta}{2}
\end{align*}

\end{proof}

\subsection*{Proof of Claim~\ref{clm_svm}}
\begin{proof}
For any $\sigma \in \{\pm 1\}^n$, we can find a vector $w$ such that for every $X_i$, we have $w^T X_i = \sigma_i$ (this is always possible since the number of equations $n$ is less than the dimensionality $d$). Let $w^* = \frac{Bw}{||w||_2}$ so that $||w^*||_2 = B$, and let $a = \lambda ||w||_2 / B$ and $b = 0$. Then we have:
$$ f(X_i) = \frac{1}{1 + \exp(a (w^*)^T x + b)} = \frac{1}{1 + e^{\lambda \sigma_i}}$$
Let $\lambda \rightarrow -\infty$, then $\sum_{i=1}^n \sigma_i f(X_i) \rightarrow \sum_{i=1}^n \mathds{1}_{\sigma_i = 1}$, and the conclusion of the claim follows easily.
\end{proof}

\begin{algorithm}[t]
	\caption{Isotonic Regression Calibration Algorithm (PAV Algorithm)}\label{alg_calibration}
	\begin{enumerate}
		\item
		For $i = 0, \ldots, n$, Compute $P_i = (i, S_i = \sum_{j \leq i} \mathds{1}_{y_j=1})$
		\item
		Let $cv(P)$ be the convex hull of the set of points $P_i$
		\item
		For $i = 0, \ldots, n$, Let $Z_i = $ intersection of $cv(P)$ and the line $x=i$
		\item
		Compute $z_i = Z_i - Z_{i-1}$
		\item
		Let $g(f_0(x_i)) = z_i$, extrapolate these points to get continuous nondecreasing function $g$.
	\end{enumerate}
\end{algorithm}

\subsection*{The Hypothesis Class $\mathcal{H}$}

In Theorem~\ref{thm_main}, $\mathcal{H}$ is the collection of binary classifiers obtained by thresholding the output of a fuzzy classifier in $\mathcal{F}$. For many hypothesis classes $\mathcal{F}$, the Rademacher Complexity of $\mathcal{H}$ can be naturally bounded. For instance, if $\mathcal{F}$ is the $d$-dimensional generalized linear classifiers with monotone link function, then $\mathbb{E}_{D} R_D(\mathcal{H})$ can be bounded by $O(\sqrt{d\log n/n})$. We remark that $\mathcal{H}$ is different from the hypothesis class $\mathcal{H}_{p_1,p_2}$, where the thresholds are fixed in advance:
$$ \mathcal{H}_{p_1, p_2} = \{ \mathds{1}_{p_1 < f(X) \leq p_2} : f \in \mathcal{H} \} $$
In general, the gap between the Rademacher Complexities of $\mathcal{H}_0$ and $\mathcal{H}_{p_1, p_2}$ can be arbitrarily large. The following example illustrates this point.

\begin{example}
	Let $\mathcal{X} = \{1, \ldots, n\}$, and $A_1, A_2, \ldots, A_{2^n}$ be a sequence of sets containing all subsets of $\mathcal{X}$. Let $\mathcal{H}$ be the following hypothesis space:
	$$ \mathcal{F} = \{ f_i(x) = \frac{i}{2^n} - \frac{1}{2^{n+1}} \mathds{1}_{x \in A_i} : i \in \{1, 2, \ldots, 2^n\} \} $$
	Intuitively, $\mathcal{F}$ contains $2^n$ classifiers, the $i$th classifier produces a output of either $\frac{i}{2^n}$ or $\frac{i}{2^n} - \frac{1}{2^{n+1}}$ depending on whether $x \in A_i$.
	One can easily verify that for any $p_1, p_2$, the VC-dimension (Vapnik and Chervonenkis, 1971) of $\mathcal{H}_{p_1, p_2}$ is at most $2$, but the VC-dimension of $\mathcal{H}$ is $n$.
\end{example}

However, if for any $x \in \mathcal{X}, f \in \mathcal{F}$, we have $f(x) \in P^*$ with $|P^*| < \infty$, then $R_D(\mathcal{H})$ can be bounded using the maximum VC-dimension of $\mathcal{H}_{p_1, p_2}$ and $\log |P^*|$:
\begin{claim}\label{clm_finite_output}
	If for any $f \in \mathcal{F}, x \in \mathcal{X}$, we have $f(x) \in P^*$ where $P^*$ is a finite set, and for all $p_1, p_2 \in \mathbb{R}$, the VC-dimension of hypothesis space $\mathcal{H}_{p_1, p_2}$ is at most $d$, then for any sample $D$ of size $n$ with $n > d + 1$ we have:
	$$ R_D (\mathcal{H}) \leq \sqrt{\frac{2d(\ln \frac{n}{d} + 1) + 4 \ln (|P^*| + 1)}{n}} $$
\end{claim}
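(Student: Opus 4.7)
The plan is to reduce the Rademacher complexity of $\mathcal{H}$ to a counting argument on its projection onto the sample $D$, and then apply Massart's lemma for finite hypothesis classes. The key observation is that although $(p_1,p_2)$ ranges over an uncountable set, the resulting indicator $\mathds{1}_{p_1 < f(x) \leq p_2}$ depends on $(p_1,p_2)$ only through which elements of the finite set $P^*$ fall in the interval $(p_1,p_2]$. Hence there is a canonical set of threshold pairs of size at most $(|P^*|+1)^2$ (indexed, say, by choosing contiguous subranges in the sorted enumeration of $P^*$, plus the empty choice) such that every $h \in \mathcal{H}$ agrees pointwise on all of $\mathcal{X}$ with some $h' \in \mathcal{H}_{q_1,q_2}$ for one of these canonical pairs $(q_1,q_2)$.

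Given that reduction, I would write $\mathcal{H}|_D = \bigcup_{(q_1,q_2) \text{ canonical}} \mathcal{H}_{q_1,q_2}|_D$ and bound the projection of each $\mathcal{H}_{q_1,q_2}$ onto the sample using Sauer--Shelah. Since the VC-dimension of each $\mathcal{H}_{q_1,q_2}$ is at most $d$, for $n > d+1$ we have $|\mathcal{H}_{q_1,q_2}|_D| \leq \sum_{i=0}^d \binom{n}{i} \leq (en/d)^d$. A union bound over canonical pairs then gives
\[
|\mathcal{H}|_D| \;\leq\; (|P^*|+1)^2 \, (en/d)^d.
\]

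Finally, I would invoke Massart's lemma: for a finite subset of $\{0,1\}^n$, the empirical Rademacher average is at most $\sqrt{2\ln|\mathcal{H}|_D|/n}$ (using that the $\ell_2$ norm of a 0/1 vector of length $n$ is at most $\sqrt{n}$). Plugging in the bound on $|\mathcal{H}|_D|$ gives
\[
R_D(\mathcal{H}) \;\leq\; \sqrt{\frac{2\bigl(d(\ln(n/d)+1) + 2\ln(|P^*|+1)\bigr)}{n}},
\]
which is exactly the stated inequality after distributing the $2$ inside.

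The only subtle step is the canonical-thresholds reduction: one needs to argue carefully that, even though $\mathcal{H}$ is defined with real-valued thresholds, the restriction to a sample is determined by a choice among $O(|P^*|^2)$ discrete alternatives, independently of $f$. Once that is set up, the remainder is a mechanical application of Sauer--Shelah and Massart; no novel technique is required.
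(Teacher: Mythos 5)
Your proof is correct and follows essentially the same route as the paper's: bound $|\mathcal{H}|_D|$ by restricting the thresholds $p_1,p_2$ to a canonical finite set determined by $P^*$ (the paper uses $P^* \cup \{-\infty\}$, which is the same counting), apply Sauer--Shelah to each $\mathcal{H}_{p_1,p_2}$, take a union bound, and finish with Massart's lemma. The arithmetic in your final display also matches the claimed bound after distributing the factor of $2$.
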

\begin{proof} 
	By Massart Lemma (Shalev-Shwartz and Ben-David, 2014), we have: 
	$$ R_D(\mathcal{H}) \leq \sqrt{\frac{2 \ln |\mathcal{H}(D)|}{n}} $$
	where $\mathcal{H}(D)$ is the restriction of $\mathcal{H}$ to $D$. It suffices to show that
	$$ |\mathcal{H}(D)| \leq (|P^*| + 1)^2 (en /d)^d $$
	Note that
	$$ \mathcal{H}(D) = \cup_{p_1, p_2} \mathcal{H}_{p_1, p_2}(D) $$
	
	Since $f(x)$ only takes finite possible values, we only need to consider values of $p_1, p_2$ in $P^* \cup \{-\infty\}$. Therefore by union bound we have
	$$ |\mathcal{H}(S_n)| \leq \sum_{p_1, p_2 \in P^* \cup \{-\infty\}} |\mathcal{H}_{p_1, p_2}(S_n)| $$
	Since each $\mathcal{H}_{p_1, p_2}$ has VC-dimension at most $d$, by Sauer's Lemma (Shalev-Shwartz and Ben-David, 2014): 
	$$\forall p_1, p_2, |\mathcal{H}_{p_1, p_2}(S_n)| \leq (en/d)^d $$
	Combining the last two inequalities, we get the desired result.
\end{proof}

\subsection*{Proof of Claim~\ref{clm_PAV_optimality}}
\begin{proof}
	For reference, the pseudo-code of the PAV algorithm for isotonic regression (Niculescu-Mizil and Caruana, 2005) can be found in Algorithm~\ref{alg_calibration}.
	
	Let $z_i = g(f_0(x_i))$, then we can rewrite the objective function as:
	$$ \max_{a,b} | \sum_{a<i\leq b} (\mathds{1}_{y_i=1} - z_i) | $$
	
	To prove Algorithm~\ref{alg_calibration} also minimizes this objective function, we first state the minimization problem as a linear programming:
	\begin{align*}
	\min \xi_1 + \xi_2 \quad
	\textbf{s.t.  } \quad & \xi_1, \xi_2 \geq 0 \\
	& 0 \leq z_1 \leq z_2 \leq \ldots \leq z_n \leq 1\\
	& \forall 1 \leq k \leq n, \sum_{i \leq k} z_i \geq \sum_{i \leq k} \mathds{1}_{y_i = 1} - n \xi_1 \\
	& \forall 1 \leq k \leq n, \sum_{i \leq k} z_i \leq \sum_{i \leq k} \mathds{1}_{y_i = 1} + n \xi_2
	\end{align*}

	Define $S_k = \sum_{i \leq k} \mathds{1}_{y_i=1}$ and $Z_k = \sum_{i \leq k} z_i$. Then we have the following constraints:
	\begin{align*}
	& \forall 1 \leq k \leq n - 1, Z_k - Z_{k-1} \leq Z_{k+1} - Z_k\\
	& \forall 1 \leq k \leq n, S_k - n \xi_1 \leq Z_k \leq S_k + n \xi_2
	\end{align*}
	
	Let $Z_i^*$ be the solution produced by Algorithm~\ref{alg_calibration}, it should be obvious that $Z_i^* \leq S_i$ for all $i$. Therefore, 
	$$\xi_2^* = \frac{1}{n}\min_i (S_i - Z_i^*) = 0 \quad \xi_1^* = \frac{1}{n}\max_i (S_i - Z_i^*)$$
	We need to prove that $\xi_1^* \leq \xi_1 + \xi_2$ for every feasible solution $(Z_i, \xi_i)$. Suppose $\xi_1^* = \frac{1}{n}(S_k - Z_i^*)$, and $Z_i^*$ lies on the line segment $\{(j, S_j), (k, S_k)\}$. Then we have:
	$$ S_i - n \xi_1^* = Z_i^* = \frac{i-j}{k-j} S_k + \frac{k-i}{k-j} S_j $$
	
	Because of the convexity constraint of $Z$, it must satisfy the following inequality:
	$$ Z_i \leq \frac{i-j}{k-j} Z_k + \frac{k-i}{k-j} Z_j $$
	Computing the difference between these two, we get
	$$ Z_i - S_i + n \xi_1^* \leq \frac{i-j}{k-j} (Z_k - S_k) + \frac{k-i}{k-j} (Z_j - S_j) $$
	Substituting in
	$$ Z_i - S_i \geq - n \xi_1 \quad Z_k - S_k \leq n \xi_2 \quad Z_j - S_j \leq n \xi_2 $$
	We get
	$$n \xi_1^* \leq n \xi_1 + n \xi_2 $$
	which proves the optimality of $Z^*$.
\end{proof}

\subsection*{Properties of Isotonic Regression}

We can prove several interesting properties of isotonic regression using Theorem~\ref{thm_main}.

\begin{claim}\label{clm_PAV_calibrated}
Let $g^*$ be the calibrating function produced by Algorithm~\ref{alg_calibration}, then:
\begin{enumerate}
\item
The empirical calibration measure $c_{\text{emp}}(g^* \circ f_0, D)$ of the calibrated classifier is always $0$.
\item
For any asymmetric loss $(1 - p, p)$ (i.e., each false negative incurs $1 - p$ cost and each false positive incurs $p$ cost), the empirical loss of the calibrated classifier is always no greater than the original classifier (both using the optimal decision threshold $p$):
\begin{align*}
& \sum_{i=1}^n [(1 - p) \mathds{1}_{g^*(f_0(x_i)) \leq p, y_i = 1} + p \mathds{1}_{g^*(f_0(x_i)) > p, y_i = 0}] \\
\leq & \sum_{i=1}^n [ (1 - p) \mathds{1}_{f_0(x_i) \leq p, y_i = 1} + p \mathds{1}_{f_0(x_i) > p, y_i = 0}]
\end{align*}
In particular, when $p = 0.5$, the empirical accuracy of the calibrated classifier is always greater than or equal to the empirical accuracy of the original classifier.
\end{enumerate}
\end{claim}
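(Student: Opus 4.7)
The plan is to exploit the structural property of the PAV algorithm that $g^* \circ f_0$ is constant on each segment of the lower convex hull of the points $(i, S_i)$, and that the convex hull touches $(i, S_i)$ exactly at the segment endpoints. Throughout I would assume the data have been sorted so $f_0(x_1) \leq \cdots \leq f_0(x_n)$, which by the monotonicity of $g^*$ makes $z_i = g^*(f_0(x_i))$ nondecreasing in $i$ (using the notation of Algorithm~\ref{alg_calibration}).

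For part 1, the key observation is that for any $p_1 < p_2$, the set $T = \{i : p_1 < z_i \leq p_2\}$ is a contiguous union of complete convex hull segments, because $z_i$ is constant on each segment and nondecreasing overall. If $T$ corresponds to indices $(a, b]$ with $a, b$ both convex hull vertices, then by construction $Z_a = S_a$ and $Z_b = S_b$, so
\[
    \sum_{i \in T} z_i = \sum_{i=a+1}^{b} z_i = Z_b - Z_a = S_b - S_a = \sum_{i \in T} \mathds{1}_{y_i = 1}.
\]
Since this holds for every $p_1 < p_2$, the $\sup$ in $c_{\text{emp}}$ is identically zero.

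For part 2, let $k_0 = |\{i : f_0(x_i) \leq p\}|$ and $k_1 = |\{i : g^*(f_0(x_i)) \leq p\}|$; both are prefixes after sorting. The loss difference between the calibrated and original classifiers is a sum only over indices in the symmetric difference of these two prefixes, and after algebraic simplification the desired inequality reduces (when $k_0 < k_1$) to $S_{k_1} - S_{k_0} \leq p(k_1 - k_0)$. The crucial point is that $k_1$ must coincide with a convex-hull vertex, because the slopes $z_i$ jump from $\leq p$ to $> p$ between indices $k_1$ and $k_1 + 1$ and slopes are constant on segments; hence $Z_{k_1} = S_{k_1}$. Since every slope on $(k_0, k_1]$ satisfies $z_i \leq p$, we get $Z_{k_1} - Z_{k_0} \leq p(k_1 - k_0)$, and combining with the universally valid $Z_{k_0} \leq S_{k_0}$ yields $S_{k_1} - S_{k_0} = Z_{k_1} - S_{k_0} \leq Z_{k_1} - Z_{k_0} \leq p(k_1 - k_0)$. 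The case $k_0 > k_1$ is completely symmetric, using $z_i > p$ on $(k_1, k_0]$ and the same vertex property of $k_1$.

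The main obstacle is in part 2: one must identify precisely the indices at which the convex-hull trace $Z$ is pinned to $S$ versus merely bounded by it. Once one sees that $k_1$ (the threshold on the calibrated output) always lands at a vertex while $k_0$ (the threshold on the original output) need not, the bound drops out from the convexity of $Z$ together with the trivial inequality $Z_i \leq S_i$. Part 1 is essentially a telescoping argument once the segment-union observation is made.
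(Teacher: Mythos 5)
Your proposal is correct and follows essentially the same route as the paper: for part~1, the endpoints of $\{i : p_1 < z_i \leq p_2\}$ are convex-hull vertices so $Z$ and $S$ agree there and the difference telescopes to zero; for part~2, the paper likewise sets $a = \max\{i : f_0(x_i) \leq p\}$, $b = \max\{i : z_i \leq p\}$, uses $Z_b = S_b$ (your vertex observation), $Z_a \leq S_a$, and bounds the slope on the intervening segment by $p$. Your phrasing makes the asymmetry between the two thresholds (one pinned to a vertex, one not) more explicit, but the underlying argument is identical.
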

\begin{proof}
Throughout the proof, let $C$ be the convex hull computed in Algorithm~\ref{alg_calibration}:
$$C = \{(i_0 = 0, 0), (i_1, S_{i_1}), \ldots, (i_{m-1}, S_{i_{m-1}}), (i_m = n, S_n)\}$$ 
We will use the following notations:
$$z_i = g^*(f_0(x_i)) \quad Z_k = \sum_{i=1}^k z_i \quad S_k = \sum_{i=1}^k \mathds{1}_{y_i=1}$$
\begin{enumerate}
\item
For any $p_1, p_2$, let $l, r$ be such that:
$$ l = \max_{k \leq n, z_k \leq p_1} k \qquad r = \max_{k \leq n, z_k \leq p_2} k $$
If no such $k$ exists, let $l, r$ be $0$ respectively. By Algorithm~\ref{alg_calibration}, we have
$$ \forall i_j < k \leq i_{j+1}, z_k = \frac{S_{i_{j+1}} - S_{i_j}}{i_{j+1}- i_j} $$
Thus we have $(l, S_l), (r, S_r) \in C$, $Z_l = S_l, Z_r = S_r$, and therefore
\begin{align*}
& \sum_{i=1}^n \mathds{1}_{p_1 < z_i \leq p_2, y_i = 1} - \sum_{i=1}^n \mathds{1}_{p_1 < z_i \leq p_2} z_i \\
 = & (Z_r - Z_l) - (S_r - S_l) = 0
\end{align*}
which implies that $c_{emp}(g^* \circ f_0) = 0$
\item
Let $a = \max\{i:f_0(x_i) \leq p\}, b = \max\{i: z_i \leq p\}$, then we need to show that
\begin{align*}
(1 - p)\sum_{i=1}^b \mathds{1}_{y_i = 1} + p \sum_{i=b+1}^n \mathds{1}_{y_i=0} \\ \leq (1 - p)\sum_{i=1}^a \mathds{1}_{y_i = 1} + p \sum_{i=a+1}^n \mathds{1}_{y_i=0}
\end{align*}
We consider two separate cases:
\begin{enumerate}
\item
$a \leq b$, in this case we only need to show that
$$ \sum_{i=a+1}^b [p \mathds{1}_{y_i=0} - (1-p) \mathds{1}_{y_i=1}] \geq 0 $$
or equivalently,
$$ p[(b - a) - (S_b - S_a)] - (1-p)(S_b - S_a) \geq 0 $$
Rearrange terms, it suffices to show
$$ p(b-a) - (S_b - S_a) \geq 0$$
Since $S_b = Z_b, S_a \geq Z_a$
$$ S_b - S_a \leq Z_b - Z_a \leq z_b (b - a) \leq p (b - a) $$
\item
$a > b$, in this case we only need to show
$$ \sum_{i=b+1}^a [p \mathds{1}_{y_i=0} - (1-p) \mathds{1}_{y_i=1}] \leq 0 $$
or equivalently,
$$ p[(a - b) - (S_a - S_b)] - (1-p)(S_a - S_b) \leq 0 $$
Rearrange terms, it suffices to show
$$ p(a-b) - (S_a - S_b) \leq 0$$
Since $S_b = Z_b, S_a \geq Z_a$
$$ S_a - S_b \geq Z_a - Z_b \geq z_{b+1} (a - b) \geq p (a - b) $$
\end{enumerate}
\end{enumerate}
\end{proof}

We can also use Theorem~\ref{thm_main} to derive the following non-asymptotic convergence result of Algorithm~\ref{alg_calibration}.
\begin{claim}\label{clm_PAV_converge}
Let $F(t) = \mathcal{P}(f_0(X) \leq t)$ be the distribution function of $f_0(X)$, and define $G(t)$ as:
$$ G(t) = \mathcal{P}(f_0(X) \leq t, Y = 1) $$
Let $cv : [0,1] \rightarrow [0,1]$ be the convex hull of all points $(F(t), G(t))$ for all $t \in [0,1]$. Define $G_e$ as:
$$ G_e(t) = \mathbb{E}[\mathds{1}_{f_0(X) \leq t} g^*(f_0(X))] $$
Then under the same condition in Theorem~\ref{thm_main},
$$ \mathbf{P}(\sup_t |G_e(t) - cv(F(t))| > 2\epsilon) < 5 \delta $$
In particular, if $\mathcal{P}(Y=1|f_0(X))$ is monotonically increasing, then
$$ \mathbf{P}(\sup_t |G_e(t) - G(t)| > 2\epsilon) < 5 \delta $$
\end{claim}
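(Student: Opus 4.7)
The plan is to recognize the isotonic-regression output $g^\ast$ as the lower convex envelope of an empirical curve $\{(F_n(t), G_n(t))\}_t$ that, by Theorem~\ref{thm_main}, concentrates uniformly around the population curve $\{(F(t), G(t))\}_t$, and then to transfer this concentration through the convex-hull operation. Setting $F_n(t) = \frac{1}{n}\sum_i \mathds{1}_{f_0(x_i)\leq t}$, $G_n(t) = \frac{1}{n}\sum_i \mathds{1}_{f_0(x_i)\leq t,\, y_i=1}$, and $G_{e,n}(t) = \frac{1}{n}\sum_i \mathds{1}_{f_0(x_i)\leq t}\, g^\ast(f_0(x_i))$, the content of Algorithm~\ref{alg_calibration} (with data sorted by $f_0(x_i)$) is the identity $G_{e,n}(t) = cv_n(F_n(t))$, where $cv_n$ denotes the lower convex envelope of the points $\{(k/n, S_k/n)\}_k = \{(F_n(f_0(x_k)), G_n(f_0(x_k)))\}_k \cup \{(0,0)\}$.

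Second, I would invoke Theorem~\ref{thm_main} on $\mathcal{F} = \{g \circ f_0 : g \text{ non-decreasing}\}$. Because a monotone post-composition leaves the induced threshold class unchanged, $\mathcal{H} = \{\mathds{1}_{q_1 < f_0(x) \leq q_2}\}$, so the Rademacher complexity is the same as for $\{f_0\}$. Combining this with the analogous bound for $Y = -1$-tagged indicators (which upgrades the indicator to the $y$-free $\mathds{1}_{f_0(x)\leq t}$), and union-bounding the $O(1)$ resulting failure events, we obtain, with probability at least $1 - 5\delta$:
\begin{align*}
\sup_t |F_n(t) - F(t)| &\leq \tfrac{\epsilon}{2}, \\
\sup_t |G_n(t) - G(t)| &\leq \tfrac{\epsilon}{2}, \\
\sup_t |G_{e,n}(t) - G_e(t)| &\leq \tfrac{\epsilon}{2}.
\end{align*}

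Third, on this event I would prove the stability bound $\sup_{s\in[0,1]} |cv_n(s) - cv(s)| \leq \epsilon$. Both envelopes are $1$-Lipschitz because the slopes of chords along the underlying curve equal conditional probabilities $\mathcal{P}(Y=1 \mid s < f_0(X) \leq t) \in [0,1]$, and the two underlying point sets are within $\epsilon/2$ in each coordinate by the uniform convergence just established. Given $s$, the convex combination of population points realizing $cv(s)$ yields on the empirical curve a point with $x$-coordinate within $\epsilon/2$ of $s$ and $y$-coordinate within $\epsilon/2$ of $cv(s)$; the $1$-Lipschitz property of $cv_n$ converts the horizontal slack into a further $\epsilon/2$ of vertical error, and symmetry handles the reverse direction. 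Combining:
\begin{align*}
|G_e(t) - cv(F(t))| &\leq |G_e(t) - G_{e,n}(t)| + |cv_n(F_n(t)) - cv_n(F(t))| \\
&\quad + |cv_n(F(t)) - cv(F(t))| \\
&\leq \tfrac{\epsilon}{2} + |F_n(t) - F(t)| + \epsilon \leq 2\epsilon.
\end{align*}
For the monotone addendum, if $\mathcal{P}(Y=1\mid f_0(X)=t)$ is non-decreasing in $t$, the curve $(F(t), G(t))$ has non-decreasing slope and hence lies on its own convex hull, so $cv(F(t)) = G(t)$ and the same estimate yields $|G_e(t) - G(t)| \leq 2\epsilon$.

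The main obstacle is the convex-hull stability lemma: one must handle boundary behavior carefully (anchoring both envelopes at $(0,0)$ and at near-coincident right endpoints $(F_n(\infty),G_n(\infty))$ vs.\ $(F(\infty),G(\infty))$, both of which are controlled by the same uniform-convergence estimates) and justify the $1$-Lipschitz slope bound for the envelope itself (its slopes are chord slopes between extremal points on the curve, still conditional probabilities, hence in $[0,1]$). A secondary delicate point is that $g^\ast$ is data-dependent, which is precisely why step two enlarges the hypothesis class to all monotone post-compositions of $f_0$; the invariance of threshold indicators under monotone reparametrization keeps the Rademacher complexity unchanged, so the hypothesis of Theorem~\ref{thm_main} transfers without loss.
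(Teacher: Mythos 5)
Your proposal is correct, and it takes a cleaner route than the paper's through the key geometric step. Both arguments share the same scaffolding: identify the PAV output with the lower convex envelope of the empirical curve $\{(k/n, S_k/n)\}$, establish uniform concentration of $F_n, G_n, G_{e,n}$ around $F, G, G_e$ via the facts extracted from the proof of Theorem~\ref{thm_main} (and in both cases the monotone-reparametrization invariance of the threshold class $\mathcal{H}$ is what makes this applicable to the data-dependent $g^*$), and then transfer that concentration through the convex-hull operation. Where you diverge is in the transfer step: the paper proves the two inequalities $cv(F(t)) \leq G_e(t) + 2\epsilon$ and $G_e(t) \leq cv(F(t)) + \epsilon$ by hand with asymmetric arguments --- one using a convex combination of population points dominating $cv$, the other using convexity of $(F(t),G_e(t))$ together with $S_k \geq Z_k$ --- whereas you isolate a symmetric, reusable stability lemma: the lower convex envelopes of two point clouds that are $\epsilon/2$-close in sup-norm, and whose chord slopes lie in $[0,1]$, are themselves $\epsilon$-close. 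Your version is easier to verify and makes the role of the $1$-Lipschitz property explicit; the paper's version avoids the small care needed at the boundaries of $[0,1]$ by never leaving the empirical index set. One stylistic difference worth noting: you obtain the $y$-free concentration for $F_n$ by adding the $y=-1$ bound, while the paper gets it for free from the $\mathcal{E}_D$ bound by letting $g' \to 1$; both are fine. Your probability accounting ($O(1)$ union bound, well under $5\delta$) is if anything sharper than the paper's.
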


Let us explain the intuition behind this claim: $F(t)$ is the percentage of data points satisfying $f_0(X) \leq t$, and $G(t)$ is $F(t)$ times the conditional probability of $Y = 1$ in the region $\{f_0(X) \leq t\}$. Now consider points $P_i = (i, S_i)$ in Algorithm~\ref{alg_calibration}, it is not hard to show that as $n \rightarrow \infty$, the limit of points $P_i$ are the curve $(F(t), G(t)), t \in [0,1]$ (after proper scaling). Similarly, $G_e(t)$ is $F(t)$ times the expected value of $g^*(f_0(X))$ in the region $\{f_0(X) \leq t\}$, and it is not hard to show that $(F(t),G_e(t))$ is the limit of $(i,Z_i)$ (after proper scaling). Now the claim states that in the PAV algorithm, $(F(t),G_e(t))$ converge uniformly to the convex hull of $(F(t), G(t))$, which should not be surprising, since we explicitly computed the convex hull of $\{P_i\}$ in Algorithm~\ref{alg_calibration}. 

When $\mathcal{P}(Y=1|f_0(X))$ is monotonically increasing w.r.t. $f_0(X)$, $(F(t), G(t))$ is convex, and Claim~\ref{clm_PAV_converge} immediately implies that $G_e(t)$ will converge uniformly to $G(t)$. In this case, the PAV algorithm will eventually recover the ``true'' link function $g^*(f_0(X)) = \mathcal{P}(Y=1|f_0(X))$ given sufficient training samples, and Claim~\ref{clm_PAV_converge} provides a rough estimate of the number of samples required to achieve the desired precision.

\begin{proof}
Throughout the proof, let $C$ be the convex hull computed in Algorithm~\ref{alg_calibration}:
$$C = \{(i_0 = 0, 0), (i_1, S_{i_1}), \ldots, (i_{m-1}, S_{i_{m-1}}), (i_m = n, S_n)\}$$ 
We will use the following notations:
$$z_i = g^*(f_0(x_i)) \quad Z_k = \sum_{i=1}^k z_i \quad S_k = \sum_{i=1}^k \mathds{1}_{y_i=1}$$

We will use the following facts in the proof of Theorem~\ref{thm_main}:
$$ \mathbf{P}(\sup_{g, p_1, p_2} | \mathcal{F}_D(g \circ f_0) - \mathcal{F}_\mathcal{P}(g \circ f_0) | > \frac{\epsilon}{2}) < \frac{\delta}{2} $$
$$ \mathbf{P}(\sup_{g, p_1, p_2} | \mathcal{E}_D(g \circ f_0) - \mathcal{E}_\mathcal{P}(g \circ f_0) | > \frac{\epsilon}{2}) < \frac{\delta}{2} $$

For any $t \in [0,1]$, let $g'$ be any continuous increasing function from $[0,1]$ to $[0,1]$. Let $k = \max\{i: f_0(x_i) \leq t\}, p_1 = -\infty, p_2 = g'(t)$ in the above inequalities, then we have:
\begin{equation}\label{eq_1}
\mathbf{P}(|\frac{1}{n} S_k - G(t)| > \frac{\epsilon}{2}) < \frac{\delta}{2}
\end{equation}
$$ \mathbf{P}(|\frac{1}{n} \sum_{i=1}^k g'(f_0(x_i)) - \mathbb{E}[\mathds{1}_{f_0(X) \leq t} g'(f_0(X))]| > \frac{\epsilon}{2}) < \frac{\delta}{2} $$
Let $g'$ be such that $||g' - g^*||_\infty < \lambda$, where $\lambda > 0$ can be arbitrarily small. Let $\lambda \downarrow 0$, then the second inequality implies
\begin{equation}\label{eq_2}
\mathbf{P}(|\frac{1}{n} Z_k - G_e(t)| > \frac{\epsilon}{2}) < \frac{\delta}{2}
\end{equation}
Let $g'$ be such that $|g'(x) - 1| < \lambda$ for any $x$. Let $\lambda \downarrow 0$, then the second inequality implies
\begin{equation}\label{eq_3}
\mathbf{P}(|\frac{1}{n}k - F(t)| > \frac{\epsilon}{2}) < \frac{\delta}{2}
\end{equation}

For any $t \in [0, 1]$, let $k = \max\{i: f_0(x_i) \leq t\}$.
Let $[i_{j-1} = l,i_j = r]$ be the segment of $C$ with $l < k \leq r$. Then we have
$$ z_{l+1} = \ldots = z_k = \ldots = z_r$$
$$ S_l = Z_l = Z_k - (k - l) z_k $$
$$ S_r = Z_r = Z_k + (r - k) z_k $$
On the other hand, by (\ref{eq_1}), with probability at least $1 - \delta$:
$$ \frac{1}{n} S_l \geq G(f_0(x_l)) - \frac{\epsilon}{2} \quad \frac{1}{n} S_r \geq G(f_0(x_r)) - \frac{\epsilon}{2} $$
Since $cv$ is the convex hull of $(F(t),G(t))$, we have
$$ q G(f_0(x_l)) + (1 - q) G(f_0(x_r)) \geq cv(F(t)) $$
where $q = \frac{F(f_0(x_r)) - F(t)}{F(f_0(x_r)) - F(f_0(x_l))}$. Combining all, with probability at least $1 - \delta$:
$$ \frac{1}{n} Z_k + \frac{1}{n} [ql + (1 - q)r - k] z_k + \frac{\epsilon}{2} \geq cv(F(t)) $$
By (\ref{eq_3}), with probability at least $1 - \frac{3}{2}\delta$:
$$ \frac{1}{n}l \leq F(f_0(x_l)) + \frac{\epsilon}{2} \quad \frac{1}{n}r \leq F(f_0(x_r)) + \frac{\epsilon}{2} $$
$$ \frac{1}{n}k \geq F(t) - \frac{\epsilon}{2} $$
Therefore, we have with probability at least $1 - \frac{5}{2} \delta$,
$$ \frac{1}{n} Z_k + \frac{3\epsilon}{2} \geq cv(F(t)) $$
Then by (\ref{eq_2}), with probability at least $1 - 3 \delta$,
$$ G_e(t) + 2 \epsilon \geq cv(F(t)) $$

Conversely, suppose $(F(t),cv(F(t)))$ is on the line segment between $(F(a),G(a))$ and $(F(b),G(b))$, then
$$ G(a) = cv(F(t)) - w(F(t) - F(a)) $$
$$ G(b) = cv(F(t)) + w(F(b) - F(t)) $$
where $w = \frac{G(b) - G(a)}{F(b) - F(a)}$ (if $F(a) = F(b)$ then just let $w = 1$).

By (\ref{eq_1}) and (\ref{eq_2}) and the fact that $S_k \geq Z_k$, with probability at least $1 - 2 \delta$:
$$ G(a) + \epsilon \geq G_e(a) \quad G(b) + \epsilon \geq G_e(b) $$
Also since $(F(t),G_e(t))$ is convex, we have:
$$ q G_e(a) + (1 - q) G_e(b) \geq G_e(t) $$
where $q = \frac{F(b) - F(t)}{F(b) - F(a)}$. Combining all above, with probability at least $1 - 2 \delta$:
$$ cv(F(t)) + \epsilon \geq G_e(t) $$
Combining two directions, the proof is complete.
\end{proof}

\subsection*{Discussion on Kakade's Algorithm (2011)}

Kakade's algorithm minimizes the following squared loss objective function:
$$ \mathcal{L}(u, w) = \sum_{i=1}^n (y_i - u(w \cdot x_i)) $$
where $u$ is a non-decreasing $1$-Lipschitz function and $w$ satisfies $||w|| \leq W$. In each iteration, the algorithm first fix $u$ and search for the optimal $w$ that minimizes the squared loss, then fix $w$ and run a slightly modified version of the PAV algorithm (Algorithm~\ref{alg_calibration}) to find the optimal $u$. 

In Claim~\ref{clm_PAV_calibrated}, we proved that the PAV algorithm always produce a calibrated classifier, therefore Kakade's algorithm can be viewed as alternating between the following two steps:
\begin{enumerate}
\item
Search for the parameter $w$ that minimizes the squared loss $\mathcal{L}(u, w)$.
\item
Find the link function $u$ such that $u(w \cdot x)$ is empirically calibrated.
\end{enumerate}

In other words, each iteration of Kakade's algorithm can be viewed as first optimizing the objective function $\mathcal{L}(u, w)$, then projecting $u(w \cdot x)$ onto the space of empirically calibrated classifiers. An interesting question here is whether the algorithm would still work if we replace the squared loss function with any other loss function in the first step. 
}

\end{document}